\newcommand{\supp}{\mathrm{supp}}
\newcommand{\bc}{\begin{center}}
\newcommand{\ec}{\end{center}}
\newcommand{\ba}{\begin{array}}
\newcommand{\ea}{\end{array}}
\newcommand{\be}{\begin{eqnarray}}
\newcommand{\ee}{\end{eqnarray}}
\newcommand{\bel}{\begin{eqnarray}\label}
\newcommand{\eel}{\end{eqnarray}}
\newcommand{\bes}{\begin{eqnarray*}}
\newcommand{\ees}{\end{eqnarray*}}
\newcommand{\bn}{\begin{enumerate}}
\newcommand{\en}{\end{enumerate}}
\definecolor{pink}{cmyk}{0, 1, 0, 0} 
\definecolor{darkgreen}{cmyk}{1,0, 1, 0}
\newtheorem{theorem}{Theorem}
\newtheorem{definition}[theorem]{Definition}
\newcommand{\indic}[1]{\mathds{1}_{#1}}
\newcommand{\leadeq}[2][4]{\MoveEqLeft[#1] #2 \nonumber}
\newtheorem{lemmapp}{Lemma}[section]
\DeclareMathOperator{\cl}{cl}
\DeclareMathOperator{\inter}{int}
\DeclareMathOperator{\cost}{cost}
\newcommand{\cC}{\mathcal{C}}
\newcommand{\cD}{\mathcal{D}}
\newcommand{\cF}{\mathcal{F}}
\newcommand{\cH}{\mathcal{H}}
\newcommand{\cK}{\mathcal{K}}
\newcommand{\cM}{\mathcal{M}}
\newcommand{\cN}{\mathcal{N}}
\newcommand{\cP}{\mathcal{P}}
\newcommand{\cQ}{\mathcal{Q}}
\newcommand{\cW}{\mathcal{W}}
\newcommand{\cX}{\mathcal{X}}
\newcommand{\cY}{\mathcal{Y}}
\newcommand{\bone}{\mathbf{1}}
\newcommand{\R}{{\rm I}\kern-0.18em{\rm R}}
\newcommand{\h}{{\rm I}\kern-0.18em{\rm H}}
\newcommand{\K}{{\rm I}\kern-0.18em{\rm K}}
\newcommand{\p}{{\rm I}\kern-0.18em{\rm P}}
\newcommand{\E}{{\rm I}\kern-0.18em{\rm E}}
\newcommand{\Z}{{\rm Z}\kern-0.18em{\rm Z}}
\newcommand{\1}{{\rm 1}\kern-0.24em{\rm I}}
\newcommand{\N}{{\rm I}\kern-0.18em{\rm N}}
\newcommand{\pn}{\p_{\kern-0.25em n}}
\newcommand{\pnm}{\p_{\kern-0.25em n,m}}
\newcommand{\psubm}{\p_{\kern-0.25em m}}
\newcommand{\psubp}{\p_{\kern-0.25em p}}
\newcommand{\cfi}{\cF_{\kern-0.25em \infty}}
\newcommand{\argmin}{\mathop{\mathrm{argmin}}}
\newcommand{\sign}{\mathop{\mathrm{sign}}}
\newcommand{\ud}{\mathrm{d}}
\newcommand{\im}{\mathrm{Im}}
\newtheorem{prop}{Proposition}[section]
\newlength{\minipagewidth}
\begin{document}

\begin{frontmatter}

\title{Statistical Optimal Transport via Factored Couplings}
\runtitle{Factored Optimal Transport}

\author{\fnms{Aden} \snm{Forrow}\ead[label=forrow]{aforrow@mit.edu},
	\fnms{Jan-Christian} \snm{H\"utter}\ead[label=huetter]{huetter@math.mit.edu},
	\fnms{Mor} \snm{Nitzan}\thanksref{tmn}\ead[label = nitzan]{mornitz@gmail.com},
	\fnms{Philippe} \snm{Rigollet}\thanksref{tpr}\ead[label = rigollet]{rigollet@math.mit.edu},
	\fnms{Geoffrey} \snm{Schiebinger}\thanksref{tgw}\ead[label = schiebinger]{geoff@broadinstitute.org},
	\fnms{Jonathan} \snm{Weed}\thanksref{tjw}\ead[label = weed]{jweed@math.mit.edu}
}

\thankstext{tpr}{P.R. is supported in part by NSF grants DMS-1712596 and TRIPODS-1740751 and IIS-1838071, ONR grant N00014-17-1-2147, the Chan Zuckerberg Initiative DAF 2018-182642, and the MIT Skoltech Seed Fund.}
\thankstext{tmn}{M.N. is supported by the James S. McDonnell Foundation, Schmidt Futures, Israel Council for Higher Education, and the John Harvard Distinguished Science Fellows Program.}
\thankstext{tjw}{J.W. is supported in part by the Josephine de Karman fellowship.}
\thankstext{tgw}{G.S. is supported by a Career Award at the Scientific Interface from the Burroughs Welcome Fund and by the Klarman Cell Observatory.}

\begin{abstract}
We propose a new method to estimate Wasserstein distances and optimal transport plans between two probability distributions from samples in high dimension. Unlike plug-in rules that simply replace the true distributions by their empirical counterparts, our method promotes couplings with low \emph{transport rank}, a new structural assumption that is similar to the nonnegative rank of a matrix.
Regularizing based on this assumption leads to drastic improvements on high-dimensional data for various tasks, including domain adaptation in single-cell RNA sequencing data. These findings are supported by a theoretical analysis that indicates that the transport rank is key in overcoming the curse of dimensionality inherent to data-driven optimal transport.
\end{abstract}

\end{frontmatter}

\section{INTRODUCTION}\label{sec:intro}

Optimal transport (OT) was born from a simple question phrased by Gaspard Monge in the eighteenth century~\citep{Mon81} and has since flourished into a rich mathematical theory two centuries later~\citep{villani-2003, Vil09}. Recently, OT and more specifically Wasserstein distances, which include the so-called \emph{earth mover's distance}~\citep{rubner-2000} as a special example, have proven valuable for varied tasks in machine learning~\citep{bassetti2006minimum,cuturi2013sinkhorn,cuturi-2014,solomon2014wasserstein,frogner-2015,srivastava2015wasp,genevay2016stochastic,gao2016distributionally,rolet-2016,genevay2017sinkhorn,RigWee18a,RigWee18b}, computer graphics~\citep{bonneel-2011,degoes-2012,solomon2014earth,solomon-2015,bonneel-2016}, geometric processing~\citep{degoes-2011,solomon2013dirichlet}, image processing~\citep{gramfort-2015,rabin-2015}, and document retrieval~\citep{ma-2014,kusner-2015}. These recent developments have been supported by breakneck advances in computational optimal transport in the last few years that allow the approximation of these distances in near linear time~\citep{cuturi2013sinkhorn, AltWeeRig17}.

In these examples, Wasserstein distances and transport plans are estimated from data. Yet, the understanding of \emph{statistical} aspects of OT is still in its infancy. In particular, current methodological advances focus on computational benefits but often overlook statistical regularization to address stability in the presence of sampling noise. Known theoretical results show that vanilla optimal transport applied to sampled data suffers from the curse of dimensionality~\citep{Dud69,DobYuk95,WeeBac17} and the need for principled regularization techniques is acute in order to scale optimal transport to high-dimensional problems, such as those arising in genomics, for example.

At the heart of OT is the computation of Wasserstein distances, which consists of an optimization problem over the infinite dimensional set of \emph{couplings} between probability distributions. (See~\eqref{eq:wp} for a formal definition.) Estimation in this context is therefore nonparametric in nature and this is precisely the source of the curse of dimensionality. To overcome this limitation, and following a major trend in high-dimensional statistics~\citep{CanPla10,LiuLinYu10,MarUse12}, we propose to impose low ``rank" structure on the couplings. Interestingly, this technique can be implemented efficiently via \emph{Wasserstein barycenters}~\citep{AguCar11,CutDou14} with finite support.

We illustrate the performance of this new procedure for a truly high-dimensional problem arising in single-cell RNA sequencing data, where ad-hoc methods for domain adaptation have recently been proposed to couple datasets collected in different labs and with different protocols~\citep{haghverdi2017correcting}, and even across
species~\citep{butler2018integrating}. Despite a relatively successful application of OT-based methods in this context~\citep{SchShuTab17}, the very high-dimensional and noisy nature of this data calls for robust statistical methods. We show in this paper that our proposed method does lead to improved results for this application.

This paper is organized as follows. We begin by reviewing optimal transport in \S\ref{sec:OT},
%
and we provide an overview of our results in \S\ref{sec:overview}. Next, we introduce our estimator in \S\ref{sec:hub}. This is a new estimator for the Wasserstein distance between two probability measures that is statistically more stable than the naive plug-in estimator that has traditionally been used. This stability guarantee is not only
backed by the theoretical results of \S\ref{sec:theory}, but also observed in numerical experiments in practice \S\ref{sec:experiments}.


\noindent{\bf Notation.} We denote by $\|\cdot\|$ the Euclidean norm over $\R^d$. For any $x \in \R^d$, let $\delta_x$ denote the Dirac measure centered at $x$. For any two real numbers $a$ and $b$, we denote their minimum by $a\wedge b$. For any two sequences $u_n,v_n$, we write $u_n\lesssim v_n$ when there exists a constant $C>0$ such that $u_n \le C v_n$ for all $n$. If $u_n\lesssim v_n$ and $v_n\lesssim u_n$, we write $u_n \asymp v_n$. We denote by $\bone_n$ the all-ones vector of $\R^n$, and by $e_i$ the $i$th standard vector in $\R^n$. Moreover, we denote by $\odot$ and $\oslash$ element-wise multiplication and division of vectors, respectively.

For any map $f:\R^d \to \R^d$ and measure $\mu$ on $\R^d$, let $f_{\#}\mu$ denote the pushforward measure of $\mu$ through $f$ defined for any Borel set $A$ by $f_{\#}\mu(A)=\mu \big(f^{-1}(A)\big)$, where $f^{-1}(A)=\{x\in \R^d\,: f(x)\in A\}$. Given a measure $\mu$, we denote its support by $\supp(\mu)$.

\section{BACKGROUND ON OPTIMAL TRANSPORT}
\label{sec:OT}
In this section, we gather the necessary background on optimal transport. We refer the reader to recent books~\citep{San15,villani-2003,Vil09} for more details.

\paragraph{Wasserstein distance} Given two probability measures $P_0$ and $P_1$ on $\R^d$, let $\Gamma(P_0,P_1)$ denote the set of \emph{couplings} between $P_0$ and $P_1$, that is, the set of joint distributions with marginals $P_0$ and $P_1$ respectively so that $\gamma \in \Gamma(P_0,P_1)$ iff $\gamma(U\times \R^d)=P_0(U)$ and $\gamma(\R^d\times V) = P_1(V)$ for all measurable $U, V \in \R^d$.

The \emph{$2$-Wasserstein distance}\footnote{In this paper we omit the prefix ``2-'' for brevity.} between two probability measures $P_0$ and $P_1$ is defined as
\begin{equation}\label{eq:wp}
W_2(P_0,P_1) :=\hspace{-1.em}\inf_{\gamma\in\Gamma(P_0,P_1)} \sqrt{\int_{\R^d\times \R^d}\hspace{-2em} \|x-y\|^2\, \ud\gamma(x,y)}\,.
\end{equation}

Under regularity conditions, for example if both $P_0$ and $P_1$ are absolutely continuous with respect to the Lebesgue measure, it can be shown the infimum in~\eqref{eq:wp} is attained at a unique coupling $\gamma^*$. Moreover $\gamma^*$ is a deterministic coupling: it is supported on a set of the form $\{(x, T(x))\,:\, x \in \supp(P_0)\}$. In this case, we call $T$ a transport \emph{map}.
In general, however, $\gamma^*$ is unique but for any $x_0 \in \supp(P_0)$, the support of $\gamma^*(x_0, \cdot)$ may not reduce to a single point, in which case, the map $x \mapsto \gamma^*(x, \cdot)$ is called a transport \emph{plan}.

\paragraph{Wasserstein space} The space of probability measures with finite $2$nd moment equipped with the metric $W_2$ is called Wasserstein space and denoted by $\cW_2$. It can be shown that $\cW_2$ is a geodesic space: given two probability measures $P_0, P_1 \in \cW_2$, the constant speed geodesic connecting $P_0$ and $P_1$ is the curve $\{P_t\}_{t \in [0,1]}$ defined as follows. Let $\gamma^*$ be the optimal coupling defined as the solution of~\eqref{eq:wp}, and for $t \in [0,1]$ let $\pi_t: \R^d \times \R^d \to \R$ be defined as $\pi_t(x,y)=(1-t)x+ty$, then $P_t=(\pi_t)_{\#}\gamma^*$.  We then call $P_{1/2}$ the geodesic midpoint of $P_0$ and $P_1$. It plays the role of an average in Wasserstein space, which, unlike the mixture $(P_0 + P_1)/2$, takes the geometry of $\R^d$ into account.


\smallskip
\paragraph{$k$-Wasserstein barycenters}
The now-popular notion of Wasserstein barycenters (WB) was introduced by~\cite{AguCar11} as a generalization
of the geodesic midpoint~$P_{1/2}$ to more than two measures.
In its original form, a WB can be any probability measure on $\R^d$, but algorithmic considerations led Cuturi and Doucet~\citep{CutDou14} to restrict the support of a WB to a finite set of size $k$. Let $\cD_k$ denote the set of probability distributions supported on $k$ points:
$$
\cD_k=\left\{\sum_{j=1}^k\alpha_j\delta_{x_j}\,:\,\alpha_j\ge 0, \sum_{j=1}^k\alpha_j=1, x_j \in \R^d \right\}.
$$
For a given integer $k$, the \emph{$k$-Wasserstein Barycenter} $\bar P$ between $N$ probability measures $P_0, \ldots P_N$ on $\R^d$ is defined by
\begin{equation}
\label{eq:WB}
\bar P = \argmin_{P \in \cD_k}\sum_{j=1}^N W_2^2(P, P^{(j)})\,.
\end{equation}
In general~\eqref{eq:WB} is not a convex problem but fast numerical heuristics have demonstrated good performance in practice~\citep{CutDou14,CutPey16,BenCarCut15,staib2017parallel,claici2018stochastic}. Interestingly, Theorem~\ref{thm:empirical_process} below indicates that the extra constraint $P \in \cD_k$ is also key to statistical stability.


\section{RESULTS OVERVIEW}
\label{sec:overview}

Ultimately, in all the data-driven applications cited above, Wasserstein distances must be estimated from data. While this is arguably the most fundamental primitive of all OT based machine learning, the statistical aspects of this question are often overlooked at the expense of computational ones. We argue that standard estimators of both $W_2(P_0,P_1)$ and its associated optimal transport plan suffer from statistical instability. The main contribution of this paper is to overcome this limitation  by injecting statistical regularization.

\paragraph{Previous work}
Let $X \sim P_0$ and $Y\sim P_1$ and let $X_1, \ldots, X_n$ (resp. $Y_1, \ldots, Y_n$) be independent copies of $X$ (resp. $Y$).\footnote{Extensions to the case where the two sample sizes differ are straightforward but do not enlighten our discussion.}  We call $\cX=\{X_1, \ldots, X_n\}$ and $\cY=(Y_1, \ldots, Y_n)$ the \emph{source} and \emph{target} datasets respectively. Define the corresponding empirical measures:
\begin{equation}
\label{eq:empirical_measures}
\hat P_0=\frac{1}n\sum_{i=1}^n \delta_{X_i}\,, \qquad \hat P_1=\frac{1}n\sum_{i=1}^n \delta_{Y_i}\,.
\end{equation}
Perhaps the most natural estimator for $W_2(P_0,P_1)$, and certainly the one most employed and studied, is the \emph{plug-in} estimator $W_2(\hat P_0,\hat P_1)$. A natural question is to determine the accuracy of this estimator. This question was partially addressed by Sommerfeld and Munk~\citep{SomMun17}, where the rate at which $\Delta_{n}:=|W_2(\hat P_0,\hat P_1)-W_2(P_0,P_1)|$ vanishes is established. They show that $\Delta_n \asymp n^{-1/2} $ if $P_0 \neq P_1$ and $\Delta_n \asymp n^{-1/4} $ if $P_0=P_1$. Unfortunately, these rates are only valid when $P_0$ and $P_1$ have finite support. Moreover, the plug-in estimator for distributions $\R^d$ has been known to suffer from the curse of dimensionality at least since the work of Dudley~\citep{Dud69}. More specifically, in this case, $\Delta_{n}\asymp n^{-1/d}$ when $d \ge 3$~\citep{DobYuk95}. One of the main goals of this paper is to provide an alternative to the naive plug-in estimator by regularizing the optimal transport problem~\eqref{eq:wp}. Explicit regularization for optimal transport problems was previously introduced by Cuturi~\citep{cuturi2013sinkhorn} who adds an entropic penalty to the objective in~\eqref{eq:wp} primarily driven by algorithmic motivations. While entropic OT was recently shown~\citep{RigWee18b} to also provide statistical regularization, that result indicates that entropic OT does not alleviate the curse of dimensionality coming from sampling noise, but rather addresses the presence of additional measurement noise. 

Closer to our setup are~\cite{CouFlaTui14} and \cite{FerPapPey14}; both consider sparsity-inducing structural penalties that are relevant for domain adaptation and computer graphics, respectively. While the general framework of Tikhonov-type regularization for optimal transport problems is likely to bear fruit in specific applications, we propose
a new general-purpose \emph{structural} regularization method, based on a new notion of complexity for joint probability measures.


\paragraph{Our contribution}
The core contribution of this paper is to construct an estimator of the Wasserstein distance between distributions that is more stable and accurate under sampling noise. We do so by defining a new regularizer for couplings, which we call the \emph{transport rank}. As a byproduct, our estimator also yields an estimator of the optimal coupling in~\eqref{eq:wp} that can in turn be used in domain adaptation where optimal transport has recently been employed~\citep{CouFlaTui14,CouFlaTui17}.


To achieve this goal, we leverage insights from a popular technique known as nonnegative matrix factorization (NMF)~\citep{PaaPen94,LeeSeu01} which has been successfully applied in various forms to many fields, including text analysis~\citep{ShaBerPau06}, computer vision~\citep{ShaHaz05}, and bioinformatics~\citep{gao2005improving}. Like its cousin factor analysis, it postulates the existence of low-dimensional latent variables that govern the high-dimensional data-generating process under study. 

In the context of  optimal transport, we consider couplings $\gamma \in \Gamma(P_0,P_1)$ such that whenever $(X,Y)\sim \gamma$, there exits a latent variable $Z$ with \emph{finite support} such that $X$ and $Y$ are conditionally independent given $Z$. To see the analogy with NMF, one may view a coupling $\gamma$ as a doubly stochastic matrix whose rows and columns are indexed by $\R^d$. We consider couplings such that this matrix can be written as the product~$AB$ where $A$ and $B^\top$ are matrices whose rows are indexed by $\R^d$ and columns are indexed by $\{1, \ldots k\}$. In that case, we call $k$ the \emph{transport rank} of $\gamma$. We now formally define these notions.
\begin{definition}
Given $\gamma \in \Gamma(P_0, P_1)$, the \emph{transport rank} of $\gamma$ is the smallest integer $k$ such that $\gamma$ can be written
\begin{equation}\label{eq:factored_coupling}
\gamma = \sum_{j=1}^k \lambda_j (Q^0_j \otimes Q^1_j)\,,
\end{equation}
where the $Q^0_j$'s and $Q^1_j$'s are probability measures on $\R^d$, $\lambda_j \geq 0$ for $j = 1, \dots, k$, and where $Q^0_j \otimes Q^1_j$ indicates the (independent) product distribution. We denote the set of couplings between $P_0$ and $P_1$ with transport rank at most $k$ by $\Gamma_k(P_0, P_1)$.
\end{definition}

When $P_0$ and $P_1$ are finitely supported,
the transport rank of $\gamma \in \Gamma(P_0, P_1)$ coincides with the nonnegative rank~\citep{Yan91,CohRot93} of $\gamma$ viewed as a matrix.
By analogy with a nonnegative factorization of a matrix, we call a coupling written as a sum as in~\eqref{eq:factored_coupling} a \emph{factored coupling}.
Using the transport rank as a regularizer therefore promotes simple couplings, i.e., those possessing a low-rank ``factorization.''
To implement this regularization, we show that it can be constructed via $k$-Wasserstein barycenters, for which efficient implementation is readily available.


As an example of our technique, we show in \S\ref{sec:experiments} that this approach can be used to obtain better results on \emph{domain adaptation} a.k.a \emph{transductive learning}, a strategy in semi-supervised learning to transfer label information from a source dataset to a target dataset. Notably, while regularized optimal transport has proved to be an effective tool for \emph{supervised} domain adaptation where label information is used to build an explicit Tikhonov regularization~\citep{CouFlaTui14}, our approach is entirely unsupervised, in the spirit of~\cite{gong2012geodesic} where unlabeled datasets are matched and then labels are transported from the source to the target. We argue that both approaches, supervised and unsupervised, have their own merits but the unsupervised approach is more versatile and calibrated with our biological inquiry regarding single cell data integration.

\section{REGULARIZATION VIA FACTORED COUPLINGS}
\label{sec:hub}

\algnewcommand\algorithmicinput{\textbf{Input:}}
\algnewcommand\Input{\item[\algorithmicinput]}
\algnewcommand\algorithmicoutput{\textbf{Output:}}
\algnewcommand\Output{\item[\algorithmicoutput]}

To estimate the Wasserstein distance between $P_0$ and $P_1$, we find a low-rank factored coupling between the empirical distributions. As we show in \S\ref{sec:theory}, the bias induced by this regularizer provides significant statistical benefits. Our procedure is based on an intuitive principle: optimal couplings arising in practice can be well approximated by assuming the distributions have a small number of pieces moving nearly independently. For example, if distributions represent populations of cells, this assumption amounts to assuming that there are a small number of cell ``types,'' each subject to different forces.

Before introducing our estimator, we note that a factored coupling induces coupled partitions of the source and target distributions. These clusterings are ``soft'' in the sense that they may include fractional points.

\begin{definition} Given $\lambda \in [0,1]$, a \emph{soft cluster} 
of a probability measure $P$ is a sub-probability measure $C$ of total mass $\lambda$ such that $0 \leq C \leq P$ as measures.
The \emph{centroid} of $C$ is defined by $\mu(C) = \frac{1}{\lambda} \int x \, \mathrm{d}C(x)$.
We say that a collection $C_1, \dots, C_k$ of soft clusters of $P$ is a \emph{partition} of $P$ if $C_1 + \dots + C_k = P$.
\end{definition}

The following fact is immediate.
\begin{prop}\label{prop:factored_to_cluster}
If $\gamma = \sum_{j=1}^k \lambda_j (Q^0_j \otimes Q^1_j)$ is a factored coupling in $\Gamma_k(P_0, P_1)$, then $\{\lambda_1 Q^0_1, \dots, \lambda_k Q^0_k\}$ and $\{\lambda_1 Q^1_1, \dots, \lambda_k Q^1_k\}$ are partitions of $P_0$ and $P_1$, respectively.
\end{prop}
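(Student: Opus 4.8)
The plan is to verify directly the two defining requirements of a \emph{partition}: that each $\lambda_j Q^0_j$ is a soft cluster of $P_0$, and that these soft clusters sum to $P_0$. The argument for the target side is symmetric — it uses the second marginal in place of the first — so I would only write out the source side in detail and then invoke symmetry for $\{\lambda_j Q^1_j\}$ and $P_1$.

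The crux is a one-line marginalization. By the defining property of a coupling, $\gamma(U \times \R^d) = P_0(U)$ for every Borel set $U \subseteq \R^d$. On the other hand, evaluating the factored representation on the same cylinder set and using that each $Q^1_j$ is a probability measure,
\[
\gamma(U \times \R^d) = \sum_{j=1}^k \lambda_j\, (Q^0_j \otimes Q^1_j)(U \times \R^d) = \sum_{j=1}^k \lambda_j\, Q^0_j(U)\, Q^1_j(\R^d) = \sum_{j=1}^k \lambda_j\, Q^0_j(U).
\]
Since this holds for all $U$, I conclude $P_0 = \sum_{j=1}^k \lambda_j Q^0_j$ as measures, which is exactly the summation condition $C_1 + \dots + C_k = P_0$ upon setting $C_j := \lambda_j Q^0_j$.

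It then remains to confirm each $C_j$ is a valid soft cluster. Nonnegativity $C_j \geq 0$ is immediate from $\lambda_j \geq 0$ and $Q^0_j \geq 0$. Taking $U = \R^d$ above shows $C_j(\R^d) = \lambda_j$ and $\sum_j \lambda_j = P_0(\R^d) = 1$, so each $\lambda_j \in [0,1]$ and $C_j$ is a sub-probability measure of total mass $\lambda_j$, as the definition requires. For the domination $C_j \leq P_0$, note that $P_0 - C_j = \sum_{i \neq j} \lambda_i Q^0_i$ is a nonnegative measure, i.e.\ $C_j(U) \leq P_0(U)$ for every Borel $U$. This checks all conditions, and the same computation with $\R^d \times V$ in place of $U \times \R^d$ yields the corresponding statement for the target.

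The statement is indeed ``immediate'': every step is a direct unwinding of the definitions, so I expect no genuine obstacle. The only points warranting mild care are reading $0 \leq C \leq P$ as an inequality between measures (valid on all Borel sets) rather than between densities, and noting that the product structure makes the first marginal of $Q^0_j \otimes Q^1_j$ collapse to $Q^0_j$ precisely because $Q^1_j$ carries unit mass.
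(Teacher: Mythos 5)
Your proof is correct and matches the paper's intent: the paper states this proposition without proof (calling it ``immediate''), and your direct marginalization argument---evaluating the factored form on cylinder sets $U \times \R^d$ to get $P_0 = \sum_j \lambda_j Q^0_j$, then checking mass, nonnegativity, and domination $\lambda_j Q^0_j \leq P_0$ via the complementary sum---is exactly the definitional unwinding the authors had in mind. No gaps; the symmetry reduction for the target side is also fine.
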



We now give a simple characterization of the ``cost'' of a factored coupling.
\begin{prop}\label{prop:factored_bias_variance}
Let $\gamma \in \Gamma_k(P_0, P_1)$ and let $C_1^0, \dots, C_k^0$ and $C_1^1, \dots, C_k^1$ be the induced partitions of $P_0$ and $P_1$, with $C_j^0(\R^d) = C_j^1(\R^d) = \lambda_j$ for $j = 1, \dots k$.
Then
\begin{align*}
\int \|x - y\|^2 \,\mathrm{d}&\gamma(x, y) =  \sum_{j=1}^k \Big(\lambda_j \|\mu(C_j^0) - \mu(C_j^1)\|^2\\
&+\sum_{l\in \{0,1\}}\int \|x - \mu(C_j^l)\|^2 \, \mathrm{d}C_j^l(x) \Big)
\end{align*}
\end{prop}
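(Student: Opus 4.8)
The plan is to expand the integral against $\gamma$ by linearity across the $k$ factors and then apply a bias--variance decomposition to each product term. Writing $\gamma = \sum_{j=1}^k \lambda_j (Q^0_j \otimes Q^1_j)$, linearity of the integral immediately gives
\[
\int \|x-y\|^2\,\mathrm{d}\gamma(x,y) = \sum_{j=1}^k \lambda_j \int\!\!\int \|x-y\|^2\,\mathrm{d}Q^0_j(x)\,\mathrm{d}Q^1_j(y).
\]
By Proposition~\ref{prop:factored_to_cluster} the induced partitions are $C_j^l = \lambda_j Q^l_j$, so $\mu(C_j^l)$ is exactly the mean of $Q^l_j$; it therefore suffices to analyze each term $\int\!\int\|x-y\|^2\,\mathrm{d}Q^0_j(x)\,\mathrm{d}Q^1_j(y)$ in isolation.

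First I would fix $j$ and set $m_l := \mu(C_j^l)$ for $l \in \{0,1\}$. Interpreting the double integral as $\rE\|X-Y\|^2$ for \emph{independent} $X \sim Q^0_j$ and $Y \sim Q^1_j$, I would decompose $X - Y = (X - m_0) - (Y - m_1) + (m_0 - m_1)$ and expand the squared Euclidean norm. This produces three diagonal pieces $\rE\|X-m_0\|^2$, $\rE\|Y-m_1\|^2$, and $\|m_0 - m_1\|^2$, together with three cross terms built from inner products of the three summands.

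The key step---and the only place the factored structure enters---is that all three cross terms vanish. Two of them, $\rE\langle X-m_0, m_0-m_1\rangle$ and $\rE\langle Y-m_1, m_0-m_1\rangle$, are zero because $\rE[X-m_0]=0$ and $\rE[Y-m_1]=0$ by the definition of the means. The remaining one, $\rE\langle X - m_0,\, Y - m_1\rangle$, is zero \emph{precisely} because $X$ and $Y$ are independent under $Q^0_j \otimes Q^1_j$, so the expectation factors as $\langle \rE[X-m_0],\,\rE[Y-m_1]\rangle = 0$ (this term would generally be a nonzero cross-covariance for an unfactored coupling). This yields the per-factor identity
\[
\int\!\!\int\|x-y\|^2\,\mathrm{d}Q^0_j(x)\,\mathrm{d}Q^1_j(y) = \|m_0 - m_1\|^2 + \int\|x-m_0\|^2\,\mathrm{d}Q^0_j(x) + \int\|y-m_1\|^2\,\mathrm{d}Q^1_j(y).
\]

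Finally I would multiply through by $\lambda_j$, converting each variance integral back to an integral against $C_j^l$ via $\lambda_j\,\mathrm{d}Q^l_j = \mathrm{d}C_j^l$ (so the weight $\lambda_j$ is absorbed into the two variance terms and retained explicitly only on $\|m_0-m_1\|^2$), and then sum over $j=1,\dots,k$. Reassembling gives exactly the claimed formula. I do not expect any genuine obstacle: the argument is a routine bias--variance (parallelogram) decomposition, and its entire content is the observation that independence of the two marginals within each factor annihilates the covariance cross term. The only care required is bookkeeping of where the weight $\lambda_j$ lands when passing from $Q^l_j$ to $C_j^l$.
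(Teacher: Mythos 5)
Your proof is correct and takes essentially the same route as the paper: both expand $\gamma = \sum_{j=1}^k \lambda_j (Q_j^0 \otimes Q_j^1)$ by linearity and apply the bias--variance decomposition $x - y = (x - \mu(C_j^0)) - (y - \mu(C_j^1)) + (\mu(C_j^0) - \mu(C_j^1))$ within each product factor, with the cross terms vanishing by the centroid definitions. Your explicit remark that the covariance cross term dies precisely because of independence within each factor is a slightly more careful articulation of what the paper compresses into ``the cross terms vanish by the definition of $\mu(C_j^0)$ and $\mu(C_j^1)$,'' but it is the same argument.
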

The sum over $l$ in the above display contains intra-cluster variance terms similar to the $k$-means objective, while the first term is a transport term reflecting the cost of transporting the partition of $P_0$ to the partition of $P_1$.
Since our goal is to estimate the transport distance, we focus on the first term. This motivates the following definition.
\begin{definition}
The cost of a factored transport $\gamma \in \Gamma_k(P_0, P_1)$ is $$\cost(\gamma) := \sum_{j=1}^k \lambda_j \|\mu(C_j^0) - \mu(C_j^1)\|^2\,$$ where $\{C_j^0\}_{j=1}^k$ and $\{C_j^1\}_{j=1}^k$ are the partitions of $P_0$ and $P_1$ induced by $\gamma$, with $C_j^0(\R^d) = C_j^1(\R^d) = \lambda_j$ for $j = 1, \dots, k$.
\end{definition}

Given empirical distributions $\hat P_0$ and $\hat P_1$, the (unregularized) optimal coupling between $\hat P_0$ and $\hat P_1$, defined as
$$
\argmin_{\gamma \in \Gamma(\hat P_0, \hat P_1)} \int \|x - y\|^2 \mathrm{d}\gamma(x, y)\,,
$$
is highly non-robust with respect to sampling noise.
This motivates considering instead the regularized version 
\begin{equation}\label{eq:factored_ot}
\argmin_{\gamma \in \Gamma_k(\hat P_0, \hat P_1)} \int \|x - y\|^2 \mathrm{d}\gamma(x, y)\,,
\end{equation}
where $k \geq 1$ is a regularization parameter.
Whereas fast solvers are available for the unregularized problem~\citep{AltWeeRig17}, it is not clear how to find a solution to~\eqref{eq:factored_ot} by similar means. While alternating minimization approaches similar to heuristics for nonnegative matrix factorization are possible~\citep{LeeSeu01,AroraNMF}, we adopt a different approach which has the virtue of connecting~\eqref{eq:factored_ot} to $k$-Wasserstein barycenters.

Following~\cite{CutDou14}, define the $k$-Wasserstein barycenter of $\hat P_0$ and $\hat P_1$ by
\begin{equation}\label{eq:hubs}
H = \argmin_{P \in \cD_k} \,\,\left\{ W_2^2(P, \hat P_0) + W_2^2(P, \hat P_1)\right\}\,.
\end{equation}
As noted above, while this objective is not convex, efficient procedures have been shown to work well in practice.

Strikingly, the $k$-Wasserstein barycenter of $\hat P_0$ and $\hat P_1$ implements a slight variant of~\eqref{eq:factored_ot}.
Given a feasible $P \in \cD_k$ in~\eqref{eq:hubs}, we first note that it induces 
a factored coupling in $\Gamma_k(\hat P_0, \hat P_1)$.
Indeed, denote by $\gamma_0$ and $\gamma_1$ the optimal couplings between $\hat P_0$ and $P$ and $P$ and $\hat P_1$, respectively.
Write $z_1, \dots, z_j$ for the support of $P$.
We can then decompose these couplings as follows:
\begin{align*}
\gamma_0  = \sum_{j=1}^k \gamma_0(\cdot \mid z_j) H(z_j), \quad
\gamma_1  = \sum_{j=1}^k \gamma_1(\cdot \mid z_j) H(z_j)
\end{align*}
Then for any Borel sets $A,B \subset \R^d$,
$$
\gamma_P(A \times B) := \sum_{j=1}^k P(z_j) \gamma_0(A | z_j) \gamma_1(B | z_j) \in \Gamma_k(\hat P_0, \hat P_1)
$$
and by the considerations above, this factored transport induces coupled partitions $C_1^0, \dots, C_k^0$ and $C_1^1, \dots, C_k^1$ of $\hat P_0$ and $\hat P_1$ respectively. We call the points $z_1, \dots, z_j$ ``hubs.''

The following proposition gives optimality conditions for $H$ in terms of this partition.

\begin{prop}\label{prop:hubs_to_factored}
The partitions $C_1^0, \dots, C_k^0$ and $C_1^1, \dots, C_k^1$ induced by the solution $H$ of~\eqref{eq:hubs} are the minimizers of
$$
\sum_{j=1}^k \Big(\frac{\lambda_j}{2} \|\mu(C_j^0) - \mu(C_j^1)\|^2+\sum_{l =0}^1\int \|x - \mu(C_j^l)\|^2 \, \mathrm{d}C_j^l(x) \Big)
$$
where $\lambda_j=\mu(C_j^0)=\mu(C_j^1)$ and the minimum is taken over all partitions of $\hat P_0$ and $\hat P_1$ induced by feasible $P \in \cD_k$.
\end{prop}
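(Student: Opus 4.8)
The plan is to expand the barycenter objective of~\eqref{eq:hubs} cluster by cluster and then optimize out the hub locations, reducing it to the stated objective. Fix a feasible $P = \sum_{j=1}^k \lambda_j \delta_{z_j} \in \cD_k$. Because $P$ is supported on the finite set $\{z_1, \dots, z_k\}$, an optimal coupling between $\hat P_0$ and $P$ does nothing more than split $\hat P_0$ into soft clusters $C_1^0, \dots, C_k^0$, one per hub, and likewise an optimal coupling between $P$ and $\hat P_1$ splits $\hat P_1$ into $C_1^1, \dots, C_k^1$; in both cases the mass assigned to $z_j$ must equal the barycenter weight, so $C_j^0(\R^d) = C_j^1(\R^d) = \lambda_j$. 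This matched-mass constraint is exactly what ties the two partitions together. Evaluating the transport cost one cluster at a time then gives
\begin{equation*}
W_2^2(P, \hat P_0) + W_2^2(P, \hat P_1) = \sum_{j=1}^k \sum_{l=0}^1 \int \|x - z_j\|^2 \, \mathrm{d}C_j^l(x),
\end{equation*}
where the $C_j^l$ are the optimal clusters.

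Next I would apply the parallel-axis (bias--variance) identity $\int \|x - z_j\|^2 \, \mathrm{d}C_j^l(x) = \int \|x - \mu(C_j^l)\|^2 \, \mathrm{d}C_j^l(x) + \lambda_j \|\mu(C_j^l) - z_j\|^2$ to each term, which isolates the dependence on $z_j$ into the single quantity $\lambda_j \|\mu(C_j^0) - z_j\|^2 + \lambda_j \|\mu(C_j^1) - z_j\|^2$. Minimizing this convex function of $z_j$ gives the midpoint $z_j = \tfrac12(\mu(C_j^0) + \mu(C_j^1))$, with optimal value $\tfrac{\lambda_j}{2}\|\mu(C_j^0) - \mu(C_j^1)\|^2$. (This also explains the factor $\tfrac12$ relative to the cost in Proposition~\ref{prop:factored_bias_variance}: routing mass through the midpoint costs half of transporting it directly.) Consequently, after minimizing over the hub locations, the barycenter objective becomes, as a function of the matched-mass partitions,
\begin{equation*}
\mathcal{J} := \sum_{j=1}^k \Big( \frac{\lambda_j}{2}\|\mu(C_j^0) - \mu(C_j^1)\|^2 + \sum_{l=0}^1 \int \|x - \mu(C_j^l)\|^2 \, \mathrm{d}C_j^l(x) \Big),
\end{equation*}
which is precisely the objective of the proposition.

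To finish, I would chain the minima. Let $\mathcal{G}(\{C_j^0\}, \{C_j^1\}, \{z_j\})$ denote the quantity $\sum_j \sum_l \int \|x - z_j\|^2 \, \mathrm{d}C_j^l(x)$, so that $\min_{\{z_j\}} \mathcal{G} = \mathcal{J}$ for any fixed matched-mass partitions. For the barycenter minimizer $H$, with induced partitions $\{C_j^0\}, \{C_j^1\}$ and hubs $\{z_j\}$, the cluster-by-cluster expansion gives $\mathcal{G}(\{C_j^0\}, \{C_j^1\}, \{z_j\}) = W_2^2(H, \hat P_0) + W_2^2(H, \hat P_1)$, the minimal barycenter value. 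Conversely, given any matched-mass partitions one may set $P = \sum_j \lambda_j \delta_{z_j}$ with $z_j$ the midpoints and use these partitions as (possibly suboptimal) couplings, which bounds $W_2^2(P, \hat P_0) + W_2^2(P, \hat P_1) \le \mathcal{J}$; hence the minimal barycenter value is at most $\min \mathcal{J}$. Combining, $\mathcal{J}(\{C_j^0\}, \{C_j^1\}) \le \mathcal{G}(\{C_j^0\}, \{C_j^1\}, \{z_j\}) = \min_{P \in \cD_k} \{ W_2^2(P, \hat P_0) + W_2^2(P, \hat P_1) \} \le \min \mathcal{J}$, forcing equality, so the partitions induced by $H$ minimize $\mathcal{J}$.

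The routine pieces — the parallel-axis identity and the midpoint computation — are immediate. I expect the point requiring the most care to be the first step: justifying that for a finitely supported $P$ the optimal transports to $\hat P_0$ and $\hat P_1$ reduce exactly to a pair of soft partitions obeying the common-mass constraint $C_j^0(\R^d) = C_j^1(\R^d) = \lambda_j$, and then correctly matching the outer minimization over $P \in \cD_k$ with the joint minimization over partitions and hubs. Keeping track of which partitions are optimal for a given $P$ versus merely feasible — and thus whether the inequalities above are tight — is where the argument must be stated precisely.
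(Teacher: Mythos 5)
Your proof is correct and takes essentially the same route as the paper: a bias--variance (parallel-axis) decomposition of $W_2^2(P, \hat P_0) + W_2^2(P, \hat P_1)$ into intra-cluster variances plus $\lambda_j\bigl(\|z_j - \mu(C_j^0)\|^2 + \|z_j - \mu(C_j^1)\|^2\bigr)$, followed by optimizing each hub to the midpoint $z_j = \frac{1}{2}(\mu(C_j^0) + \mu(C_j^1))$, which produces the factor $\frac{\lambda_j}{2}\|\mu(C_j^0) - \mu(C_j^1)\|^2$. Your explicit two-sided chaining of the minima at the end simply spells out what the paper compresses into ``Minimizing over $P \in \cD_k$ yields the claim.''
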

Comparing this result with Proposition~\ref{prop:factored_bias_variance}, we see that this objective agrees with the objective of~\eqref{eq:factored_ot} up to a multiplicative factor of $1/2$ in the transport term.

We therefore view~\eqref{eq:hubs} as a algorithmically tractable proxy for~\eqref{eq:factored_ot}. Hence, we propose the following estimator $\hat W$ of the squared Wasserstein distance:
\begin{equation}\label{eq:estimator}
\hat W := \cost(\gamma_H)\,, \quad \quad \text{where $H$ solves~\eqref{eq:hubs}}\,.
\end{equation}
We can also use $\gamma_H$ to construct an estimated transport map $\hat T$ on the points $X_1, \dots, X_n \in \supp(\hat P_0)$ by setting
$$
\hat T(X_i) = X_i + \frac{1}{\sum_{j=1}^k C_j^0(X_i)} \sum_{j=1}^k C_j^0(X_i) (\mu(C_j^1) - \mu(C_j^0))\,.
$$
Moreover, the quantity $\hat T_\sharp \hat P_0$ provides a stable estimate of the target distribution, which is particularly useful in domain adaptation.

Our core algorithmic technique involves computing a $k$-Wasserstein Barycenter as in~\eqref{eq:WB}. This problem is non-convex in the variables \( \cM \) and \( (\gamma_0, \gamma_1) \), but it is separately convex in each of the two.
Therefore, it admits an alternating minimization procedure similar to Lloyd's algorithm for $k$-means~\citep{Llo82}, which we give in Algorithm \ref{alg:kbary}.
The update with respect to the hubs \( \cH = \{z_1, \dots, z_k\} \), given plans \( \gamma_0 \) and \( \gamma_1 \) can be seen to be a quadratic optimization problem, with the explicit solution
\begin{equation*}
	z_j = \frac{\sum_{i = 1}^{n} \gamma_0(z_j, X_i) X_i + \sum_{i=1}^{n} \gamma_1(z_j, Y_i) Y_i}{\sum_{i=1}^{n} \gamma_0(z_j, X_i) + \sum_{i=1}^{n} \gamma_1(z_j, Y_i)},
\end{equation*}
leading to Algorithm \ref{alg:up-points}.

In order to solve for the optimal \( (\gamma_0, \gamma_1) \) given a value for the hubs \( \cH = \{z_1, \dots, z_k\} \) we add the following entropic regularization terms~\citep{cuturi2013sinkhorn} 
to the objective function \eqref{eq:hubs}:
\begin{equation*}
	-\varepsilon \sum_{i, j} (\gamma_0)_{j, i} \log((\gamma_0)_{j, i}) - \varepsilon \sum_{i, j} (\gamma_1)_{j, i} \log((\gamma_1)_{j, i}),
\end{equation*}
 where \( \varepsilon > 0 \) is a small regularization parameter.
This turns the optimization over \( (\gamma_0, \gamma_1) \) into a projection problem with respect to the Kullback-Leibler divergence, which can be solved by a type of Sinkhorn iteration, see \cite{BenCarCut15} and Algorithm \ref{alg:up-plans}. For small \( \varepsilon \), this will yield a good approximation to the optimal value of the original problem,
but the Sinkhorn iterations become increasingly unstable.
We employ a numerical stabilization strategy due to \cite{Sch16} and \cite{ChiPeySch16}.
Also, an initialization for the hubs is needed, for which we suggest using a $k$-means clustering of either $\cX$ or $\cY$.

\begin{algorithm}[H]
	\caption{{\sc FactoredOT}}
	\label{alg:kbary}
	\begin{algorithmic}
		\Input Sampled points $\cX, \cY$, parameter $ \varepsilon > 0$
		\Output Hubs $\cM$, transport plans  $\gamma_0, \gamma_1$
		\Function{FactoredOT}{$\cX, \cY, \varepsilon$}
		\State Initialize $\cM$, e.g $\cM \gets$ \Call{KMeans}{$\cX$}
		\While{not converged}
		\State \( (\gamma_0, \gamma_1) \gets \Call{UpdatePlans}{\cX, \cY, \cM}\)
		\State \( \cM \gets \Call{UpdateHubs}{\cX, \cY, \gamma_0, \gamma_1}\)
		\EndWhile
		\State \Return \( (\cM, \gamma_0, \gamma_1) \)
		\EndFunction
	\end{algorithmic}
\end{algorithm}

\begin{algorithm}[H]
	\caption{{\sc UpdateHubs}}
	\label{alg:up-points}
	\begin{algorithmic}
		\Function{UpdateHubs}{$ \cX, \cY, \gamma_0, \gamma_1 $}
		\For{$j = 1, \dots, k$}
        \State $p^{(0)}_{i,j}=\gamma_0(z_j, X_i);\ p^{(1)}_{i,j} =\gamma_1(z_j, Y_i)$
		\State $ z_j \gets \frac{ \sum_{i = 1}^{n} \{p^{(0)}_{i,j} X_i +  p^{(0)}_{i,j} Y_i\} }{\sum_{i=1}^{n} \{p^{(0)}_{i,j} + p^{(1)}_{i,j}\}}$
		\EndFor
		\EndFunction
	\end{algorithmic}
\end{algorithm}

\begin{algorithm}[H]
	\caption{{\sc UpdatePlans}}
	\label{alg:up-plans}
	\begin{algorithmic}
		\Require Points \( \cX, \cY \), hubs \( \cM \), parameter \( \varepsilon > 0\)
		\Function{UpdatePlans}{$ \cX, \cY, \cM, \varepsilon $}
		\State $u_0 = u_1 = \bone_k, \; v_0 = v_1 = \bone_n$
		\State $(\xi_0)_{j, i} = \exp( \| z_j - X_i \|_2^2/\varepsilon)$
		\State $(\xi_1)_{j, i} = \exp( \| z_j - Y_i \|_2^2/\varepsilon)$
		\While{not converged}
		\State $ v_0 = \frac{1}{n}\bone_n \oslash (\xi_0^\top u_0)\;\   v_1 = \frac{1}{n}\bone_n \oslash (\xi_1^\top u_1)$
		\State $ w = (u_0 \odot (\xi_0 v_0))^{1/2} \odot (u_1 \odot (\xi_1 v_1))^{1/2}$
        \State $u_0 = w \oslash (\xi_0 v_0); \  u_1 = w \oslash (\xi_1 v_1)$
		\EndWhile\\
		\Return $ (\operatorname{diag}(u_0) \xi_0 \operatorname{diag}(v_0), \operatorname{diag}(u_1) \xi_1 \operatorname{diag}(v_1)) $
		\EndFunction
	\end{algorithmic}
\end{algorithm}

\section{THEORY}\label{sec:theory}
In this section, we give theoretical evidence that the use of factored transports makes our procedure more robust.
In particular, we show that it can overcome the ``curse of dimensionality'' generally inherent to the use of Wasserstein distances on empirical data.

To make this claim precise, we show that the objective function in~\eqref{eq:hubs} is robust to sampling noise.
This result establishes that despite the fact that the unregularized quantity $W_2^2(\hat P_0, \hat P_1)$ approaches $W_2^2(P_0, P_1)$ very slowly, the empirical objective
in~\eqref{eq:hubs}
approaches the population objective
uniformly at the \emph{parametric} rate, thus significantly improving the dependence on the dimension.
Via the connection between~\eqref{eq:hubs} and factored couplings established in Proposition~\ref{prop:hubs_to_factored}, this result implies that regularizing by transport rank yields significant statistical benefits.

\begin{theorem}\label{thm:empirical_process}
Let $P$ be a measure on $\R^d$ supported on the unit ball, and denote by $\hat P$ an empirical distribution comprising $n$ i.i.d.\ samples from $P$.
Then with probability at least $1-\delta$,
\begin{equation}
\sup_{\rho \in \cD_k} |W^2_2(\rho, \hat P) - W^2_2(\rho, P)| \lesssim \sqrt{\frac{k^3 d \log k + \log(1/\delta)}{n}}\,.
\end{equation}
\end{theorem}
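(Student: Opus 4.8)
The plan is to pass to the dual formulation of semi-discrete optimal transport, thereby turning the uniform control over $\rho \in \cD_k$ into a uniform law of large numbers for a function class of controlled complexity. Fix $\rho = \sum_{j=1}^k \alpha_j \delta_{z_j} \in \cD_k$. By Kantorovich duality with cost $c(x,y) = \|x-y\|^2$, using that the optimal dual potential on the continuous side is the $c$-transform of the (finite) potential on the support of $\rho$,
$$W_2^2(\rho, Q) = \sup_{\psi \in \R^k}\Big[\int \min_{1\le j\le k}\big(\|x - z_j\|^2 - \psi_j\big)\,\mathrm{d}Q(x) + \sum_{j=1}^k \alpha_j \psi_j\Big]$$
for any measure $Q$. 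Writing $f_{z,\psi}(x) := \min_j(\|x - z_j\|^2 - \psi_j)$ and using the elementary bound $|\sup_a A(a) - \sup_a B(a)| \le \sup_a |A(a) - B(a)|$ (the term $\sum_j \alpha_j \psi_j$ is identical for $\hat P$ and $P$ and cancels), I obtain
$$\sup_{\rho \in \cD_k}\big|W_2^2(\rho, \hat P) - W_2^2(\rho, P)\big| \le \sup_{f \in \cF_k}\Big|\int f \,\mathrm{d}(\hat P - P)\Big|,$$
where $\cF_k$ is the class of all $f_{z,\psi}$, i.e. pointwise minima of $k$ paraboloids. This reduces the theorem to a single empirical-process bound; crucially, the bias in $W_2^2$ (the source of the curse of dimensionality for the plug-in estimator) never appears, because $\hat P$ and $P$ are tested against the \emph{same} function.

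The second step reduces $\cF_k$ to a uniformly bounded, uniformly Lipschitz class. Since $P$ is supported on the unit ball, only the values of $f$ there matter, and because $\int(\cdot)\,\mathrm{d}(\hat P - P)$ annihilates additive constants I may normalize the potentials. The key claim is that support atoms $z_j$ far from the ball, or potentials $\psi_j$ extreme relative to the others, correspond to pieces that are never active on $B(0,1)$ and may be discarded or projected onto $B(0,1)$ without changing $f$ on the ball; expanding $\|x - z_j\|^2 = \|x\|^2 - 2\langle x, z_j\rangle + \|z_j\|^2$ exhibits $f$ as $\|x\|^2$ plus a minimum of $k$ affine functions, so after this reduction the slopes satisfy $\|z_j\|\le 1$ and the effective spread of the $\psi_j$ is $O(1)$. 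Hence every $f \in \cF_k$ is bounded by an absolute constant on $B(0,1)$ and is parametrized by $(z_j, \psi_j)_{j=1}^k$ ranging over a bounded subset of $\R^{k(d+1)}$, on which $f$ depends in a Lipschitz manner.

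The third step is a standard chaining argument. A perturbation of each $(z_j, \psi_j)$ by $\eta$ moves $f$ by $O(\eta)$ in the supremum norm on $B(0,1)$, so covering the parameter box at scale $\eta$ yields $\log N(\eta, \cF_k, \|\cdot\|_\infty) \lesssim k(d+1)\log(C/\eta)$; the naive parameter count already gives a $kd$ factor, and a finer accounting of how the $k$ minima interact (together with the admissible range of potentials) produces the stated polynomial factor in $k$ and the $\log k$. After symmetrization, Dudley's entropy integral gives $\mathbb{E}\sup_{f \in \cF_k}|\int f\,\mathrm{d}(\hat P - P)| \lesssim \sqrt{k^3 d \log k / n}$. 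Finally, the functional $(X_1, \dots, X_n) \mapsto \sup_{f\in\cF_k}|\int f\,\mathrm{d}(\hat P - P)|$ has bounded differences of order $1/n$ (moving one sample perturbs $\hat P$ by mass $1/n$ and each $f$ is $O(1)$-bounded), so McDiarmid's inequality concentrates it around its mean up to $\sqrt{\log(1/\delta)/n}$; adding the two contributions yields the claim.

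The hardest part is the second step: rigorously justifying that the supremum over all of $\cD_k$ — where support atoms may be arbitrarily far from the unit ball and weights arbitrarily small — reduces to a bounded, Lipschitz class at the cost of only lower-order terms, and extracting the correct polynomial dependence on $k$ from the metric entropy of minima of $k$ paraboloids. The first and last steps are routine once this reduction is in place.
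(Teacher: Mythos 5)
Your route is genuinely different from the paper's and, modulo one caveat below, it works --- in fact it yields a \emph{stronger} bound. The paper argues on the primal side: it first proves a structural lemma (via Kantorovich potentials) that the optimal coupling from an absolutely continuous measure to a $k$-point measure is induced by a map whose fibers are $(k-1)$-polyhedra; it uses this to bound the supremum by $5k \sup_{c,S} |\E_P f_{c,S} - \E_{\hat P} f_{c,S}|$ with $f_{c,S}(x) = \|x-c\|^2 \indic{x \in S}$, $S \in \cP_{k-1}$ (Proposition~\ref{prop:measures_to_partitions}); and it controls that integral probability metric by Gaussian comparison (Slepian--Sudakov--Fernique) plus a VC bound for intersections of half-spaces. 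Because $\hat P$ is discrete, the paper also needs a convolution-smoothing step to invoke the structural lemma. Your dual route --- semi-discrete Kantorovich duality, cancellation of the $\sum_j \alpha_j \psi_j$ term, and $|\sup - \sup| \le \sup|\cdot|$ --- reaches a uniform empirical-process bound over $c$-transforms (minima of $k$ shifted paraboloids) directly, for \emph{arbitrary} measures, so none of the absolute-continuity machinery is needed. Moreover, the multiplicative factor $5k$ the paper pays in its reduction (the source of the $k^3$) never appears for you: the class has roughly $k(d+1)$ bounded parameters, and chaining gives $\E \sup_{f \in \cF_k} |\int f \, \mathrm{d}(\hat P - P)| \lesssim \sqrt{kd/n}$, which implies the stated $\sqrt{k^3 d \log k / n}$ a fortiori. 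Your remark that a ``finer accounting of how the $k$ minima interact'' is needed to produce the $k^3$ factor is backwards: the naive parameter count suffices, and you should simply claim the stronger bound.

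The caveat concerns your step 2, and it is a defect your argument shares with the \emph{literal} statement of the theorem rather than one specific to your proof. If the atoms of $\rho$ may range over all of $\R^d$ (as the definition of $\cD_k$ allows), the claim that faraway atoms ``are never active on $B(0,1)$ and may be discarded or projected'' is false: take $k=1$, where $W_2^2(\delta_z, \hat P) - W_2^2(\delta_z, P) = \int \|x\|^2 \, \mathrm{d}(\hat P - P)(x) - 2\langle z, \E_{\hat P} X - \E_P X \rangle$ is almost surely unbounded in $z$, so the supremum over all of $\cD_1$ is almost surely infinite and no reduction can rescue it. The theorem must be read with $\rho$ supported on the unit ball; the paper tacitly does this (its Proposition~\ref{prop:measures_to_partitions} restricts the supremum to $\|c\| \le 1$, and its bounded-difference step $|W_2^2(\rho, \hat P) - W_2^2(\rho, \tilde P)| \le 4/n$ also requires it). Under that reading, your reduction is correct and easy to make rigorous: normalize $\max_j \psi_j = 0$; since each paraboloid takes values in $[0,4]$ on the ball when $\|z_j\| \le 1$, any piece active somewhere on the ball has $\psi_j \in [-4, 0]$, nowhere-active pieces may be dropped without changing $f$ on the ball, and the resulting class is bounded by an absolute constant and Lipschitz in its $k(d+1)$ bounded parameters. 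Chaining and McDiarmid then go through exactly as you describe, matching the paper's concentration step.
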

A simple rescaling argument implies that this $n^{-1/2}$ rate holds for all compactly supported measures.

This result complements and generalizes known results from the literature on $k$-means quantization~\citep{Pol82,RakCap06,MauPon10}. Indeed, as noted above, the $k$-means objective is a special case of a squared $W_2$ distance to a discrete measure~\citep{Pol82}. Theorem~\ref{thm:empirical_process} therefore recovers the $n^{-1/2}$ rate for the generalization error of the $k$-means objective; however, our result applies more broadly to \emph{any} measure $\rho$ with small support. Though the parametric $n^{-1/2}$ rate is optimal, we do not know whether the dependence on $k$ or $d$ in Theorem~\ref{thm:empirical_process} can be improved. We discuss the connection between our work and existing results on $k$-means clustering in the supplement.

Finally note that while this analysis is a strong indication of the stability of our procedure, it does not provide explicit rates of convergence for $\hat W$ defined in~\eqref{eq:estimator}. This question requires a structural description of the optimal coupling between $P_0$ and $P_1$ and is beyond the scope of the present paper.

\begin{figure*}[t]
\begin{center}
	\begin{subfigure}[t]{0.3\textwidth}
		\includegraphics[width=\textwidth]{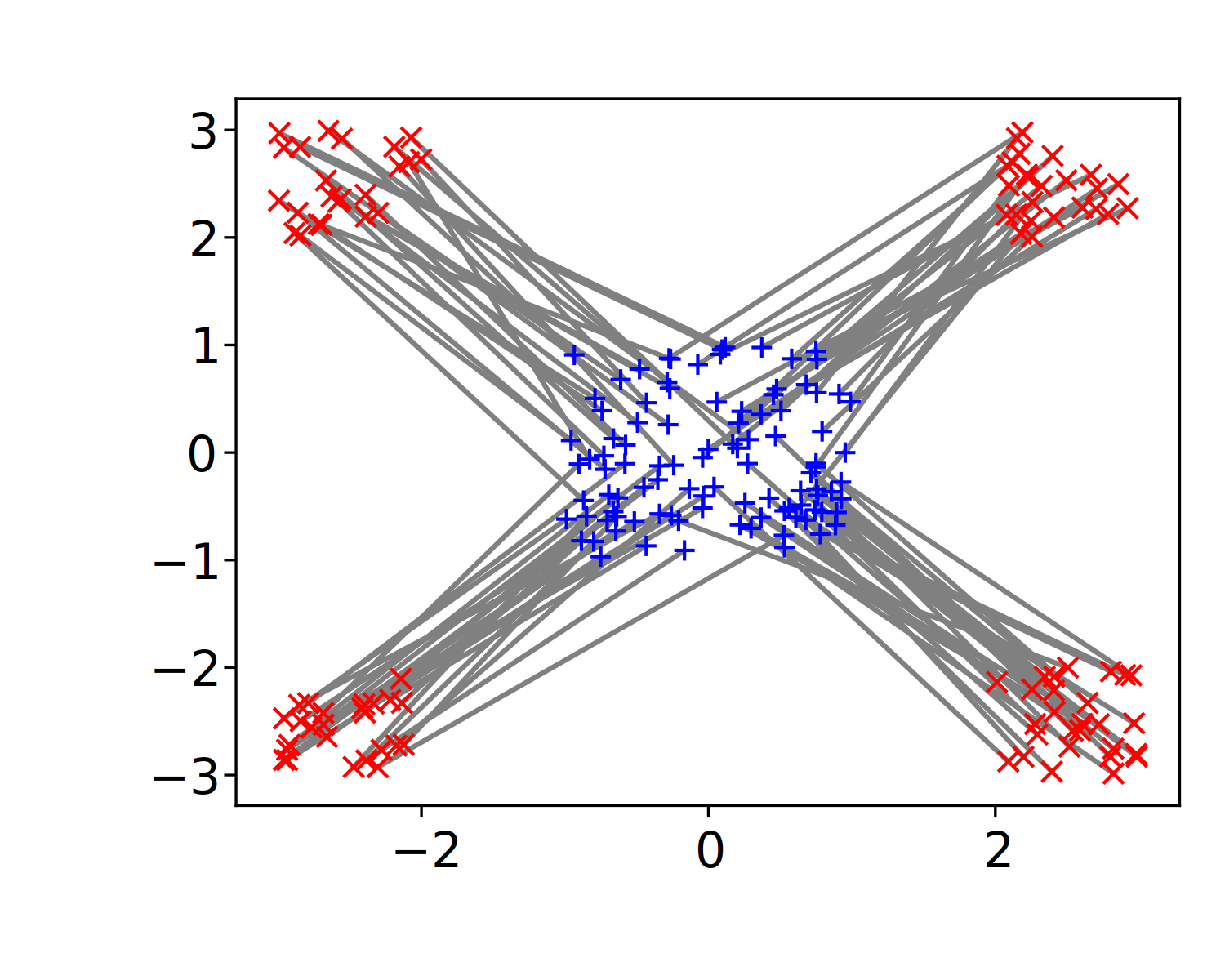}
	\end{subfigure}
	\begin{subfigure}[t]{0.3\textwidth}
		\includegraphics[width=\textwidth]{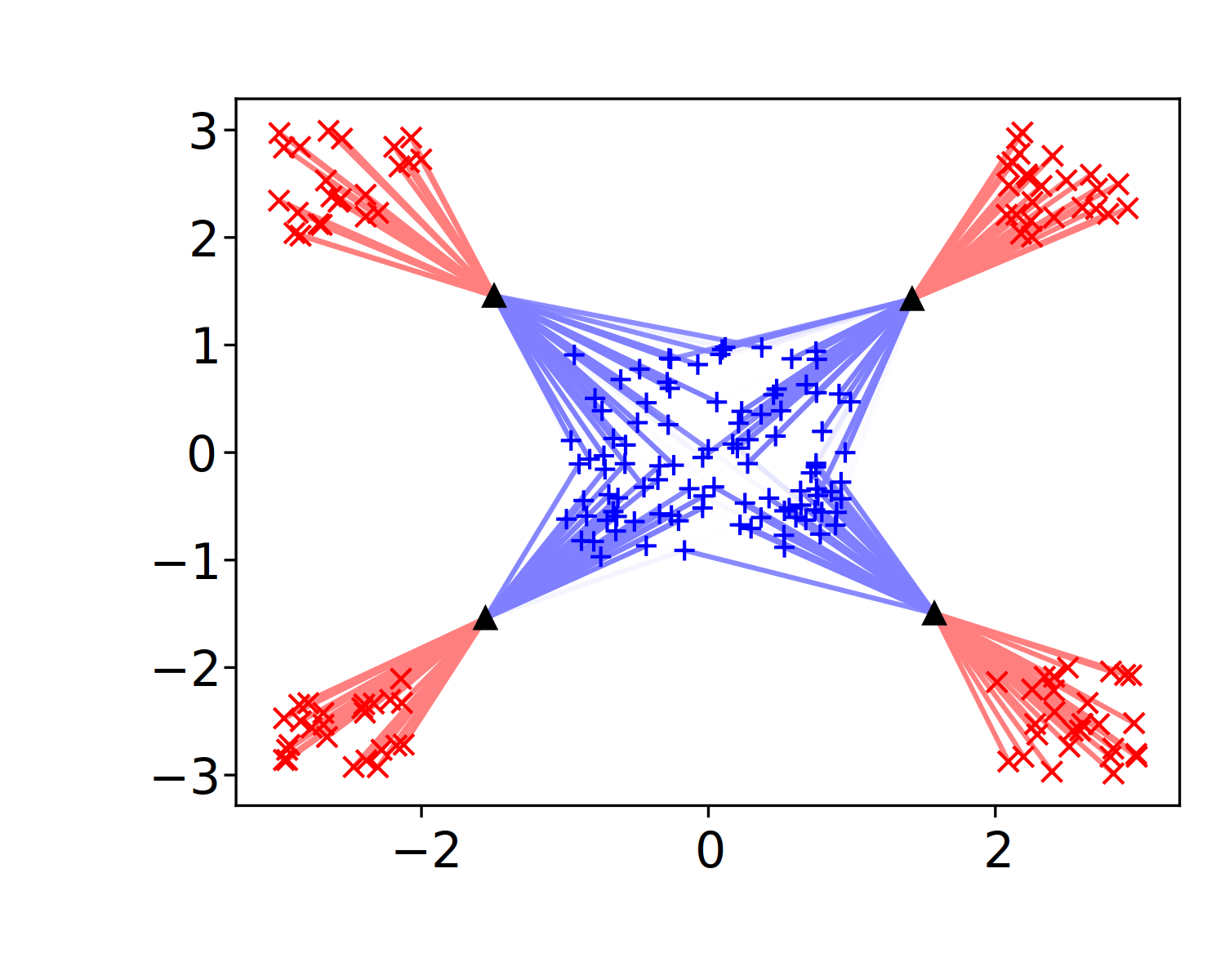}
	\end{subfigure}
	\begin{subfigure}[t]{0.3\textwidth}
		\includegraphics[width=\textwidth]{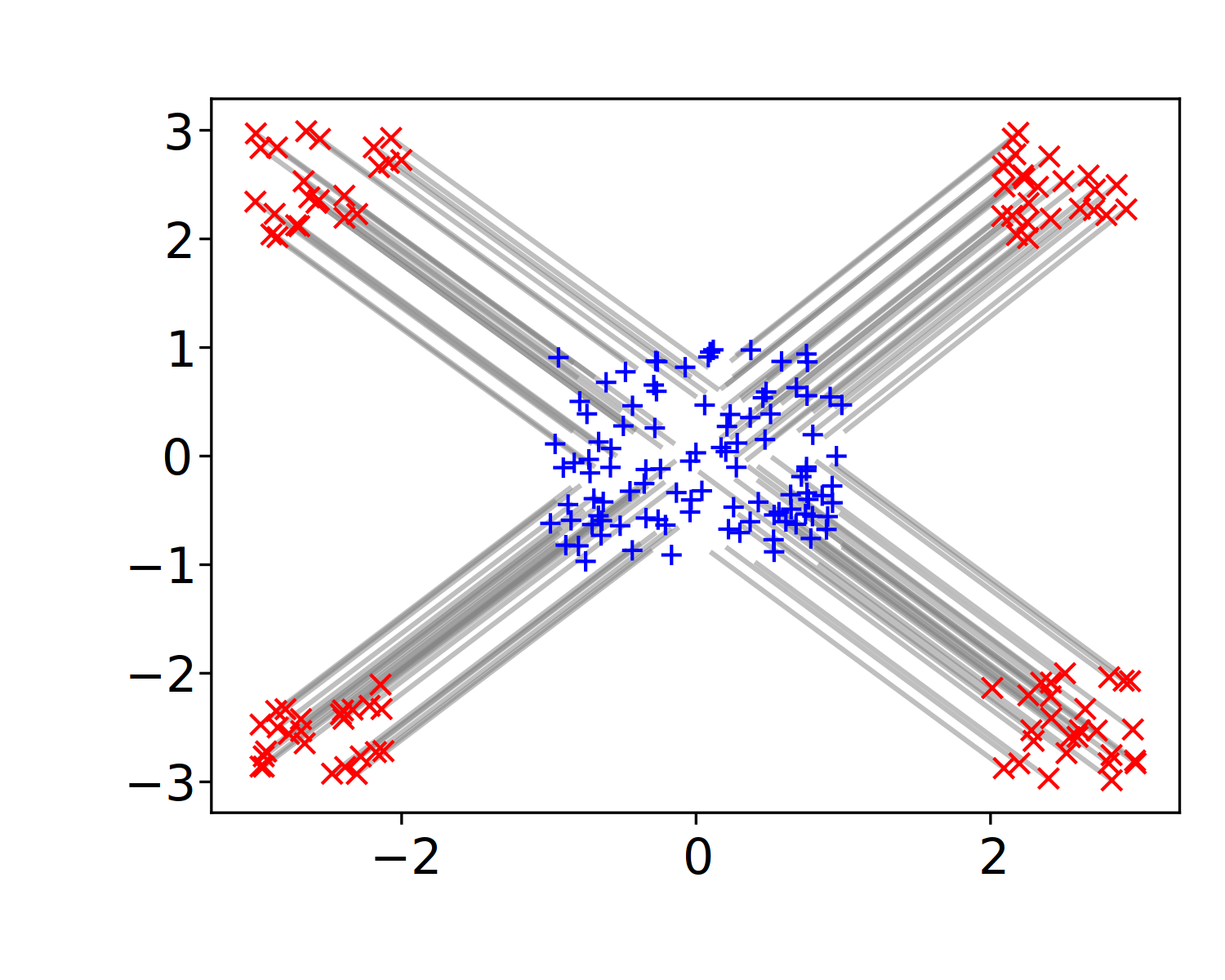}
	\end{subfigure}
	\begin{subfigure}[t]{0.3\textwidth}
		\includegraphics[width=\textwidth]{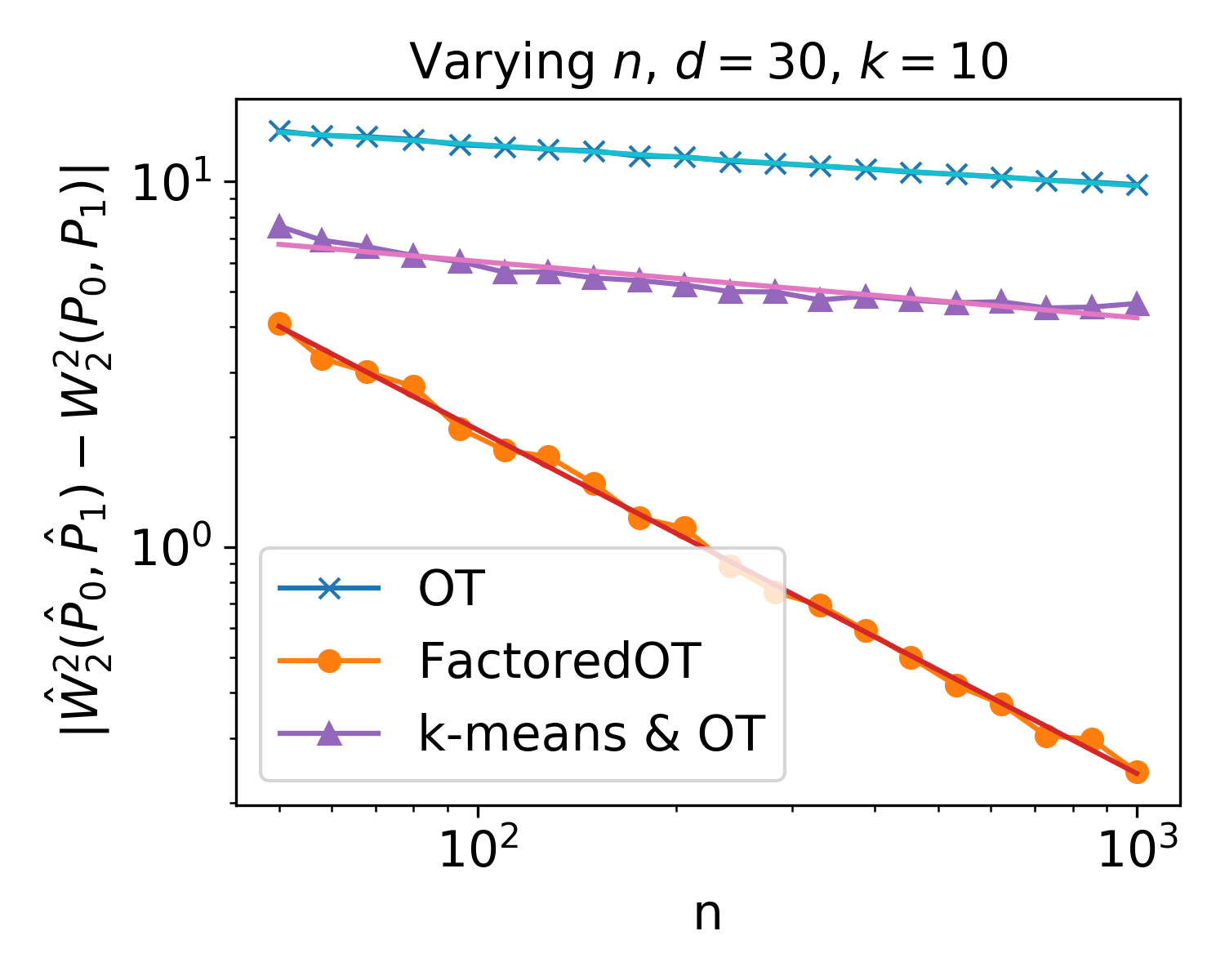}
	\end{subfigure}
	\begin{subfigure}[t]{0.3\textwidth}
		\includegraphics[width=\textwidth]{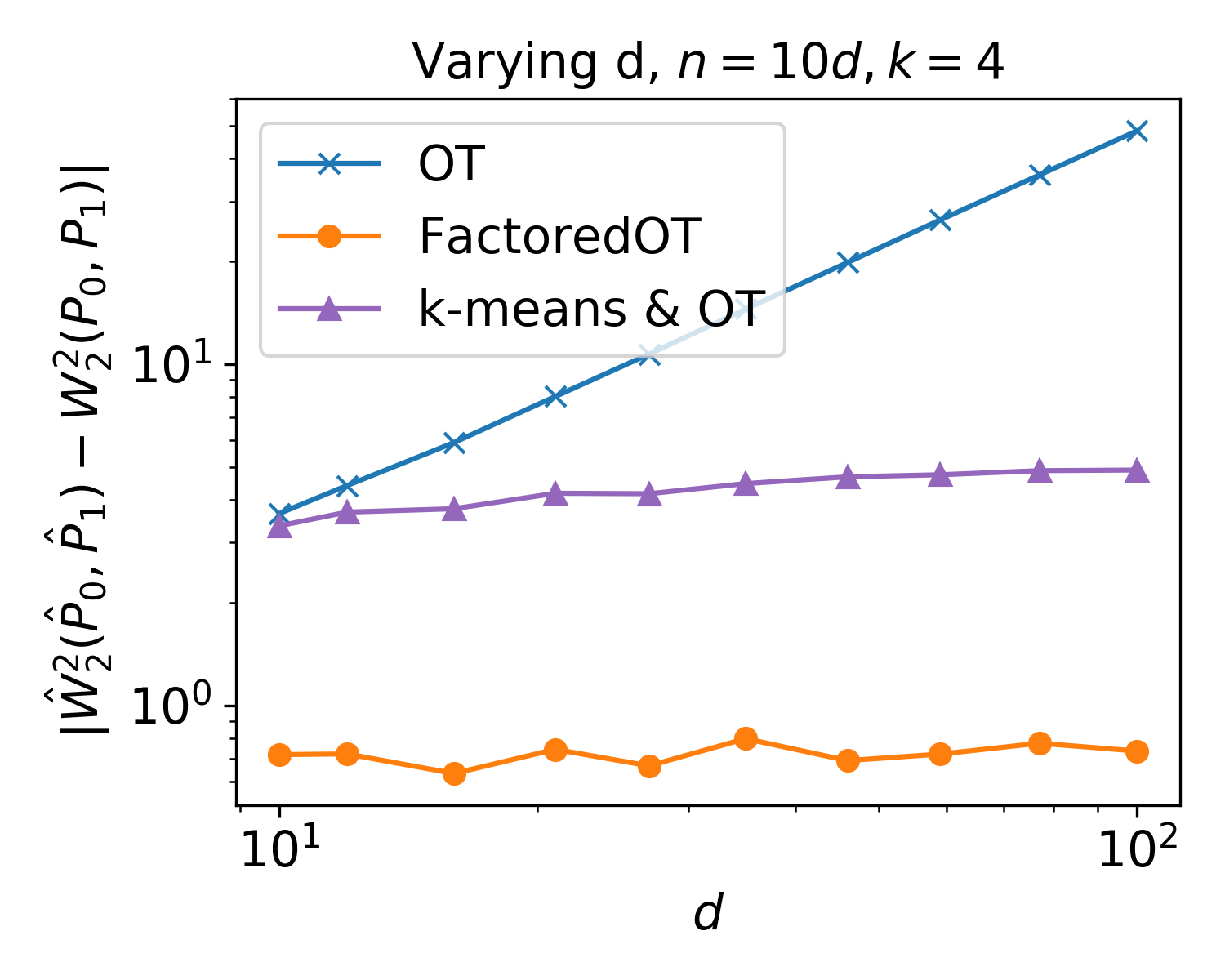}
	\end{subfigure}
	\begin{subfigure}[t]{0.3\textwidth}
		\includegraphics[width=\textwidth]{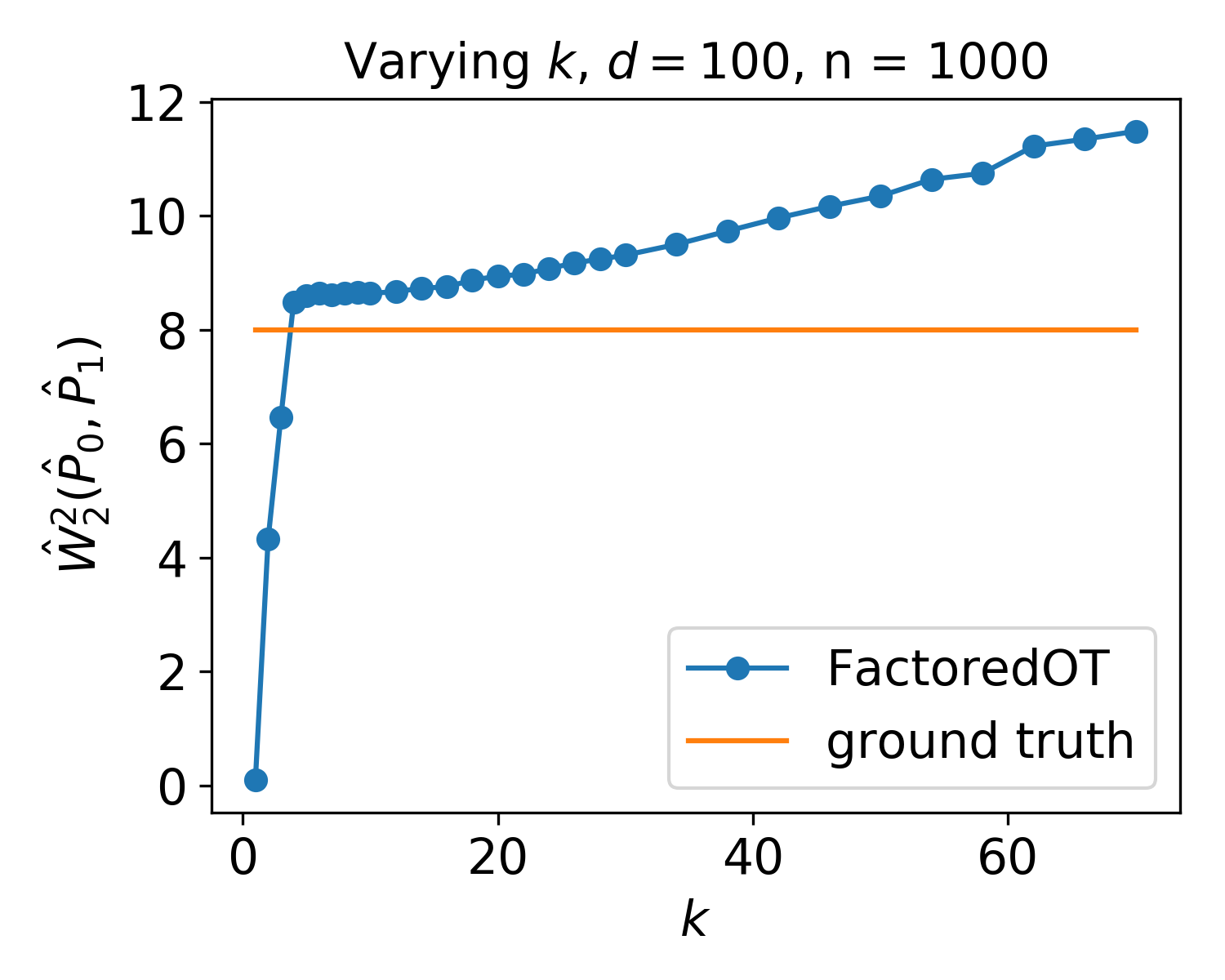}
	\end{subfigure}
	\caption{Fragmenting hypercube example.
		{\bf Top row:} Projections to the first two dimensions (computed for \( d = 30 \)) of (left) the OT coupling of samples from $P_0$ (in blue) to samples from $P_1$ (red), (middle) the FactoredOT coupling (factors in black), and (right) the FactoredOT coupling rounded to a map.  
	{\bf Bottom row:} Performance comparisons for (left) varying $n$ and (middle) varying $ d $ with $ n = 10d $, as well as (right) a diagnostic plot with varying $k$. All points are averages over 20 samples.}
    \label{FigHypercube}
    \end{center}
\end{figure*}



\begin{figure*}[h]
\begin{center}
	\begin{subfigure}[t]{0.3\textwidth}
		\includegraphics[width=\textwidth]{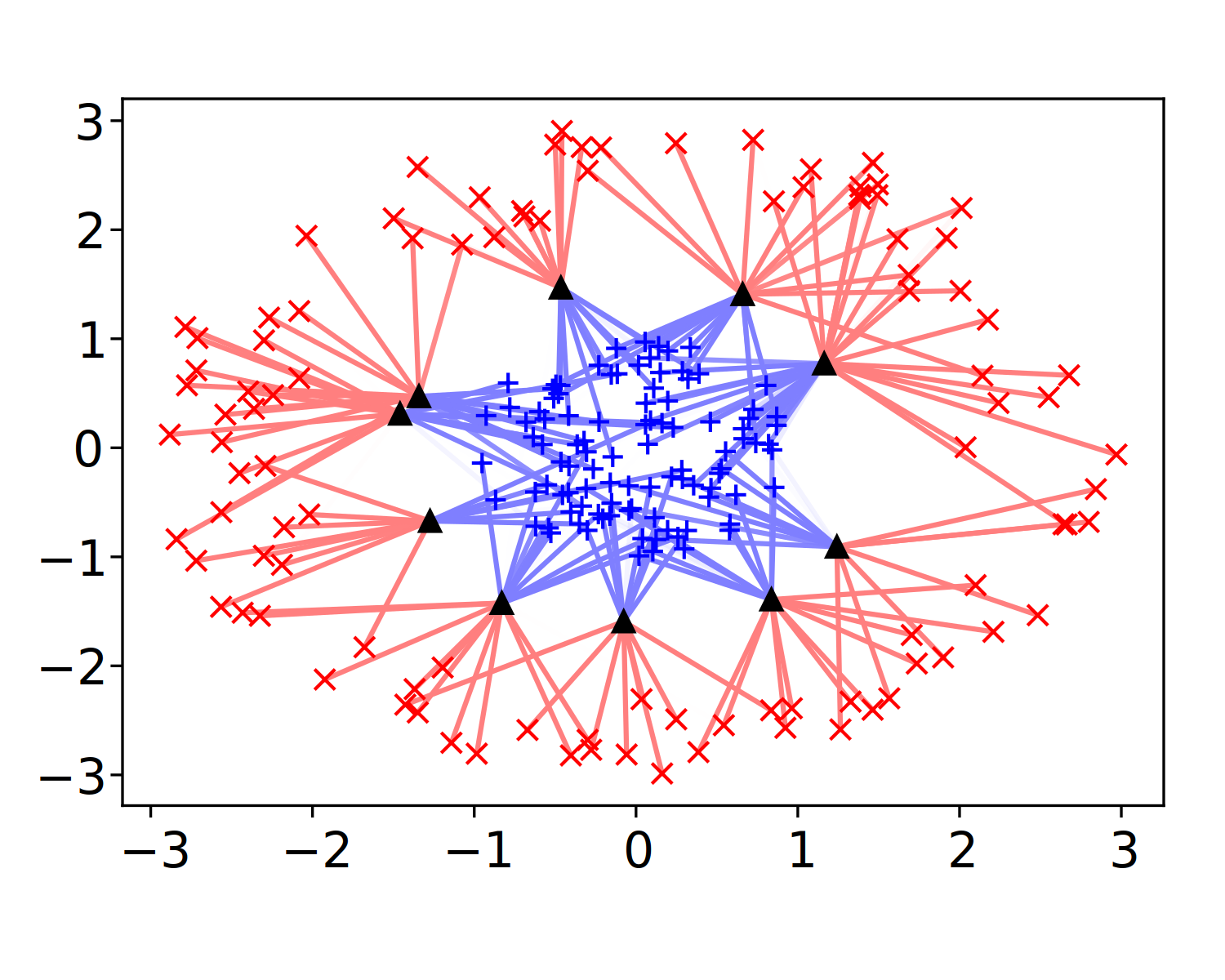}
	\end{subfigure}
	\begin{subfigure}[t]{0.3\textwidth}
		\includegraphics[width=\textwidth]{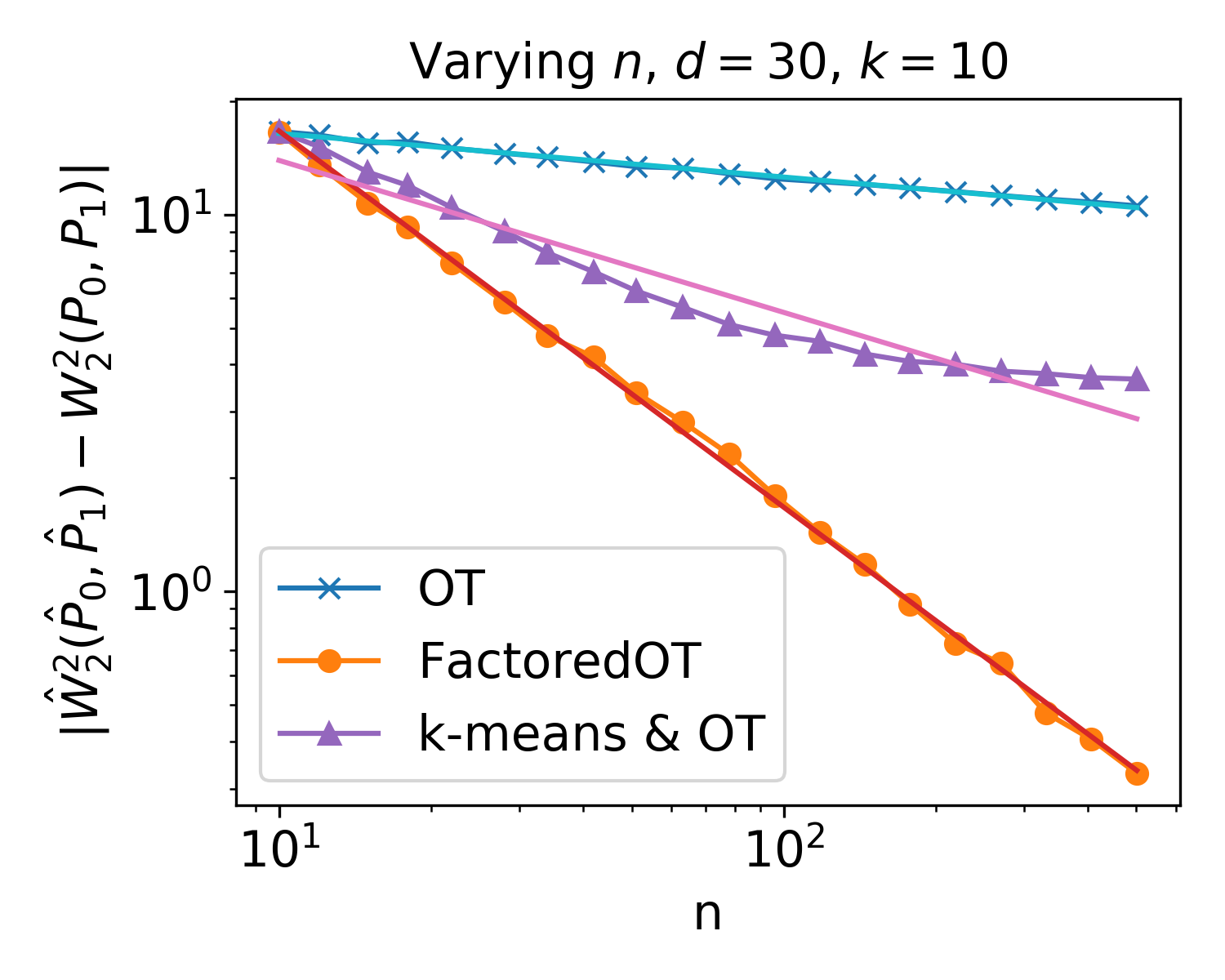}
	\end{subfigure}
	\begin{subfigure}[t]{0.3\textwidth}
		\includegraphics[width=\textwidth]{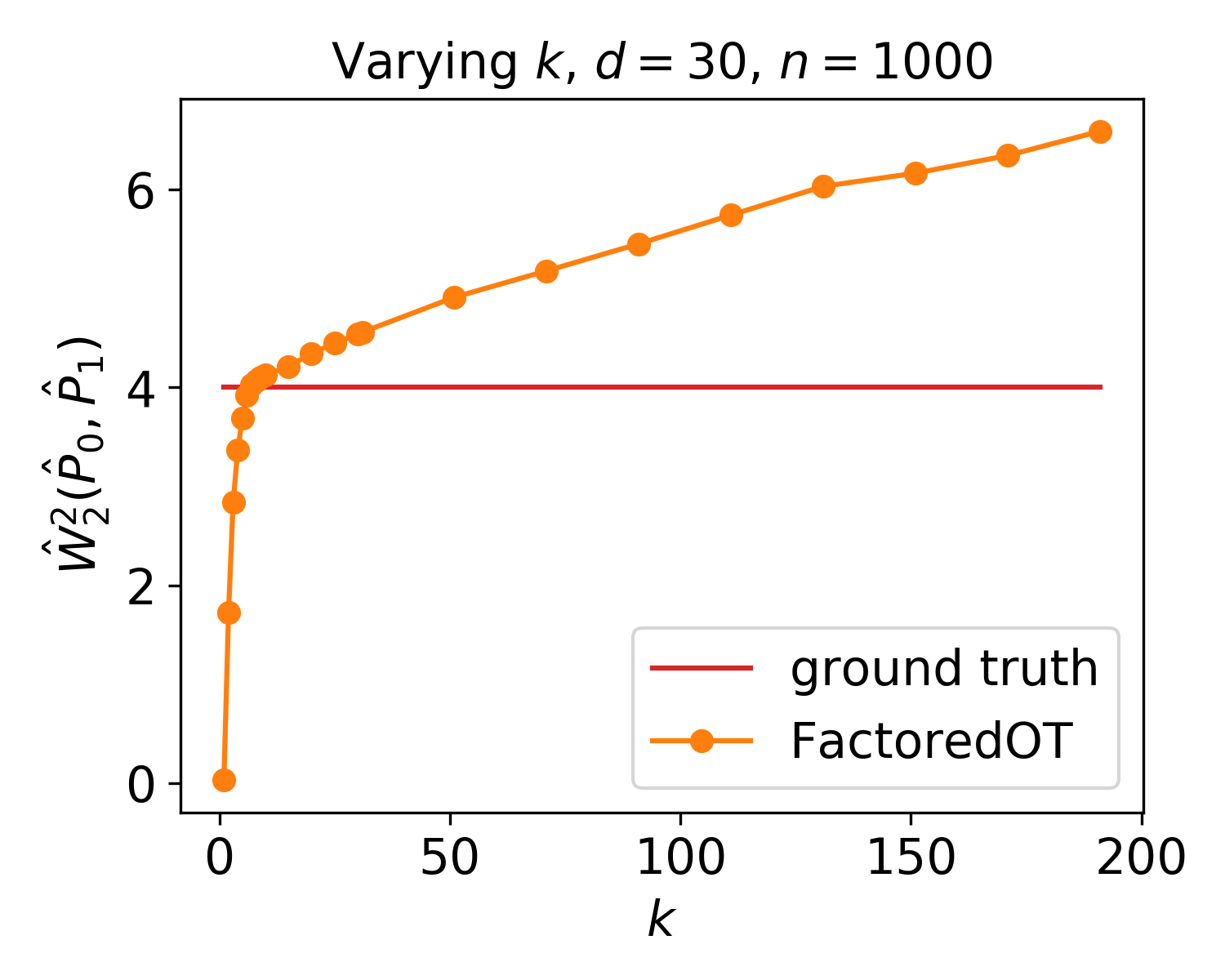}
	\end{subfigure}
	\caption{Disk to annulus example, \( d = 30 \).   
	{\bf	 Left:}  Visualization of the cluster assignment in first two dimensions. {\bf Middle:} Performance for varying \( n \).  {\bf Right:} Diagnostic plot when varying \( k \).}
    \label{FigAnnulus}
    \end{center}
\end{figure*}

\section{EXPERIMENTS}\label{sec:experiments}
 We illustrate our theoretical results with numerical experiments on both simulated and real high-dimensional data. 

For further details about the experimental setup, we refer the reader to Section \ref{sec:numerics-details} of the appendix.

\subsection{Synthetic data}
\label{sec:hypercube}

We illustrate the improved performance of our estimator for the \( W_2 \) distance on two synthetic examples.

\paragraph{Fragmented hypercube} We consider \( P_0 = \textsf{Unif}([-1,1]^d) \), the uniform distribution on a hypercube in dimension \( d \) and \( P_1 = T_\#(P_0) \), the push-forward of \( P_0 \) under a map \( T \), defined as the distribution of \( Y = T(X) \), if \( X \sim P_0 \).
We choose $T(X) = X + 2 \sign(X) \odot (e_1 + e_2)$, where the sign is taken element-wise.
As can be seen in Figure~\ref{FigHypercube}, this splits the cube into four pieces which drift away. 
This map is the subgradient of a convex function and hence an optimal transport map by Brenier's Theorem \citep[Theorem 2.12]{villani-2003}.
This observation allows us to compute explicitly $W_2^2(P_0,P_1)=8$.
We compare the results of computing optimal transport on samples and the associated empirical optimal transport cost with the estimator \eqref{eq:estimator}, as well as with a simplified procedure that consists in first performing $k$-means on both \( \hat P_0 \) and \( \hat P_1 \) and subsequently calculating the \( W_2 \) distance between the centroids.

The bottom left subplot of Figure~\ref{FigHypercube} shows that FactoredOT provides a substantially better estimate of the $W_2$ distance compared to the empirical optimal transport cost, especially in terms of its scaling with the sample size.
Moreover, from the bottom center subplot of the same figure, we deduce that a linear scaling of samples with respect to the dimension is enough to guarantee bounded error, while in the case of an empirical coupling, we see a growing error. Finally, the bottom right plot indicates that the estimator is rather stable to the choice of $k$ above a minimum threshold. We suggest choosing $k$ to match this threshold.

\paragraph{Disk to annulus}
To show the robustness of our estimator in the case where the ground truth Wasserstein distance is not exactly the cost of a factored coupling, we calculate the optimal transport between the uniform measures on a disk and on an annulus.
In order to turn this into a high-dimensional problem, we consider the 2D disk and annulus as embedded in \( d \) dimensions and extend both source and target distribution to be independent and uniformly distributed on the remaining \( d - 2 \) dimensions.
In other words, we set
\begin{gather*}
	\begin{aligned}
	P_0 = \textsf{Unif}(\{ x \in \mathbb{R}^d : {} & \| (x_1, x_2) \|_2 \le 1,\\
																								 & x_i \in [0,1] \text{ for } i = 3, \dots, d \})\\
	\end{aligned}\\
	\begin{aligned}
	P_1 = \textsf{Unif}(\{ x \in \mathbb{R}^d : {} & 2 \le \| (x_1, x_2) \|_2 \le 3,\\
																								 & x_i \in [0,1] \text{ for } i = 3, \dots, d \})
	\end{aligned}
\end{gather*}
Figure \ref{FigAnnulus} shows that the performance
is similar to that obtained for the fragmenting hypercube.

\subsection{Batch correction for single cell RNA data}
\label{sec:rna}

The advent of single cell RNA sequencing is revolutionizing biology with a data deluge. 
Biologists can now quantify the cell types that make up different tissues and quantify the molecular changes that govern development (reviewed in \cite{wagner2016revealing} and \cite{kolodziejczyk2015technology}).
As data is collected by different labs, and for different organisms, there is an urgent need for methods to robustly integrate and align these different datasets~\citep{butler2018integrating,haghverdi2018batch,crow2018characterizing}.

Cells are represented mathematically as points in a several-thousand dimensional vector space, with a dimension for each gene. The value of each coordinate represents the expression-level of the corresponding gene. 
Here we show that optimal transport achieves state of the art results for the task of aligning single cell datasets.
We align a pair of haematopoietic datasets collected by different scRNA-seq protocols in different laboratories (as described in~\cite{haghverdi2018batch}). We quantify performance by measuring the fidelity of cell-type label transfer across data sets. This information is available as ground truth in both datasets, but is not involved in computing the alignment.  

\begin{figure}[t]
	\begin{center}
		\includegraphics[width=0.7\linewidth]{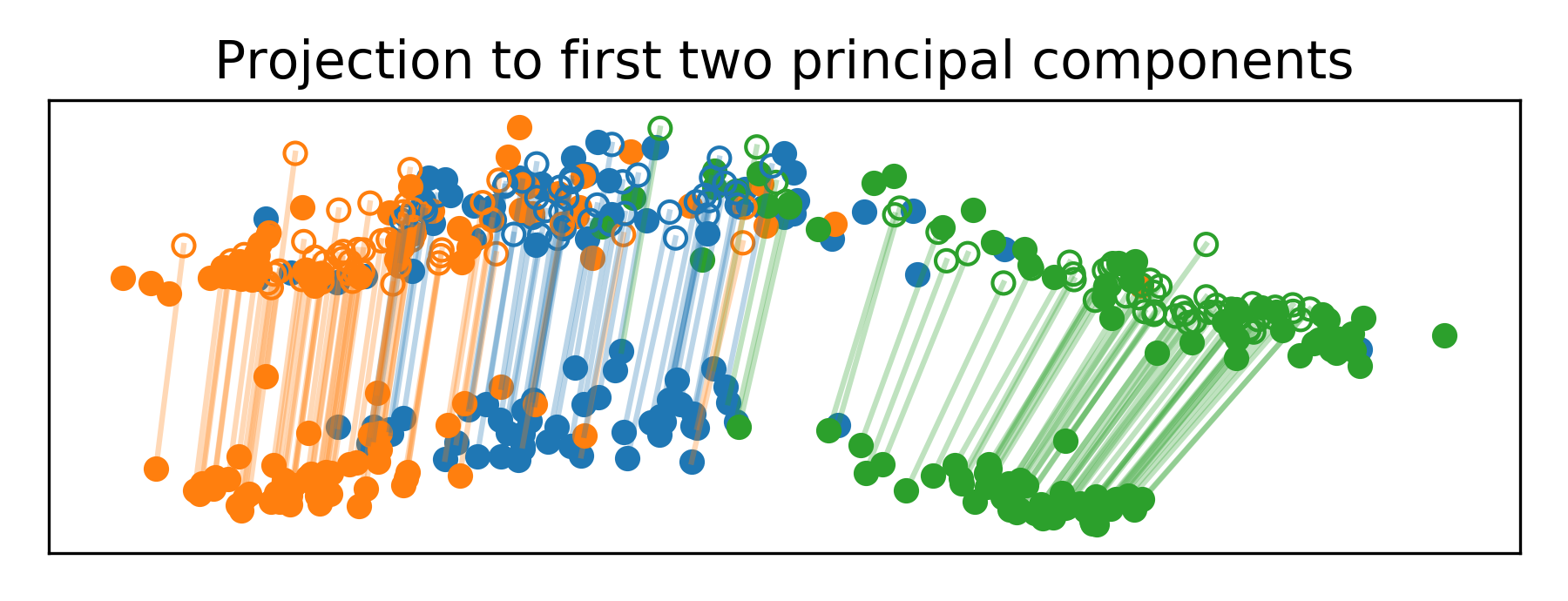}
	\end{center}
	\caption{Domain adaptation for scRNA-seq. Both source and target data set are subsampled (50 cells/type) and colored by cell type. Empty circles indicate the inferred label with 20NN classification after FactoredOT.}
	\label{fig:domain-adaptation}
\end{figure}

\newcommand\mc[1]{\multicolumn{1}{c}{#1}}
\newcommand\leftalign[1]{\textbf{#1}}
\begin{wrapfigure}{R}{0.5\textwidth}
	\begin{minipage}{0.5\textwidth}
\begin{table}[H]
	\caption{Mean mis-classification percentage (Error) and standard deviation (Std) for scRNA-Seq batch correction}
\begin{center}
\begin{tabular}{l S S}
	\toprule
	Method & \mc{Error} & \mc{Std}\\
	\midrule
	\leftalign{FOT} & \textbf{14.10} & 4.44\\
	\leftalign{MNN} & 17.53 & 5.09\\
	\leftalign{OT} & 17.47 & 3.17\\
	\leftalign{OT-ER} & 18.58 & 6.57\\
	\leftalign{OT-L1L2} & 15.47 & 5.35\\
	\leftalign{kOT} & 15.37 & 4.76\\
	\leftalign{SA} & 15.10 & 3.14\\
	\leftalign{TCA} & 24.57 & 7.04\\
	\leftalign{NN} & 21.98 & 4.90\\
	\bottomrule
\end{tabular}
\end{center}
\label{scTable}
\end{table}
\end{minipage}
\end{wrapfigure}

We compare the performance of FactoredOT (FOT) to the following baselines: (a) independent majority vote on k nearest neighbors in the target set (NN), (b) optimal transport (OT), (c) entropically regularized optimal transport (OT-ER), (d) OT with group lasso penalty (OT-L1L2) \citep{CouFlaTui14}, (e) two-step method in which we first perform $k$-means and then perform optimal transport on the $k$-means centroids (kOT), (f) Subspace Alignment (SA) \citep{FerHabSeb13}, (g) Transfer Component Analysis (TCA) \citep{PanTsaKwo11}, and (h) mutual nearest neighbors (MNN) \citep{haghverdi2018batch}.
After projecting the source data onto the target set space, we predict the label of each of the source single cells by using a majority vote over the 20 nearest neighbor single cells in the target dataset (see Figure \ref{fig:domain-adaptation} for an example).
FactoredOT outperforms the baselines for this task, as shown in Table 1, where we report the percentage of mislabeled data.

\section{DISCUSSION}\label{sec:discussion}

In this paper, we make a first step towards statistical regularization of optimal transport
with the objective of both estimating the Wasserstein distance and the optimal coupling between two probability distributions. 
Our proposed methodology generically applies to various tasks associated to optimal transport, leads to a good estimator of the Wasserstein distance even in high dimension, and is also competitive with state-of-the-art domain adaptation techniques. Our theoretical results demonstrate that the curse of dimensionality in statistical optimal transport can be overcome by imposing structural assumptions.
This is an encouraging step towards the deployment of optimal transport as a tool for high-dimensional data analysis.

Statistical regularization of optimal transport remains largely unexplored and many other forms of inductive bias may be envisioned. For example, $k$-means OT used in Section~\ref{sec:experiments} implicitly assumes that marginals are clustered--e.g., coming from a mixture of Gaussians. In this work we opt for a regularization of the optimal coupling itself, which could be accomplished in other ways. Indeed, while Theorem~\ref{thm:empirical_process} indicates that factored couplings overcome the curse of dimensionality, latent distributions with infinite support but low complexity are likely to lead to similar improvements.

\newpage

\newpage

\bibliographystyle{abbrvnat}
\bibliography{OTNotes}

\newpage

\appendix

\newcommand{\reffacbias}{\ref{prop:factored_bias_variance}}
\section{Proof of Proposition~\protect\reffacbias}
By the identification in Proposition~\ref{prop:factored_to_cluster}, we have $\gamma = \sum_{j=1}^k \frac{1}{\lambda_j} C_j^0 \otimes C_j^1$.
We perform a bias-variance decomposition:
\begin{align*}
	\leadeq{\int \|x - y\|^2 \, \mathrm{d}\gamma(x, y)}\\
	= {} & \sum_{j=1}^k \frac{1}{\lambda_j} \int \|x - y\|^2 \, \mathrm{d}C_j^0(x) \mathrm{d}C_j^1(y) \\
	= {} &\sum_{j=1}^k \frac{1}{\lambda_j} \int \|x - \mu(C_j^0) - (y - \mu(C_j^1)) + (\mu(C_j^0) - \mu(C_j^1))\|^2 \, \mathrm{d}C_j^0(x) \mathrm{d}C_j^1(y) \\
	= {} & \sum_{j=1}^k \int \|x - \mu(C_j^0)\|^2 \, \mathrm{d}C_j^0(x)  + \int  \|y - \mu(C_j^1)\|^2 \mathrm{d}C_j^1(y) + \lambda_j \|\mu(C_j^0) - \mu(C_j^1)\|^2\,,
\end{align*}
where the cross terms vanish by the definition of $\mu(C_j^0)$ and $\mu(C_j^1)$. \qed

\newcommand{\refhubsfac}{\ref{prop:hubs_to_factored}}
\section{Proof of Proposition~\protect\refhubsfac}
We first show that if $H$ is an optimal solution to~\eqref{eq:hubs}, then the hubs $z_1, \dots, z_k$ satisfy $z_j = \frac 12 (\mu(C_j^0) + \mu(C_j^1))$ for $j = 1, \dots k$.
Let $P$ be any distribution in $\cD_k$. Denote the support of $P$ by $z_1, \dots, z_k$, and let $\{C_j^0\}, \{C_j^1\}$ be the partition of $\hat P_0$ and $\hat P_1$ induced by the objective $W_2^2(P, \hat P_0) + W_2^2(P, \hat P_1)$.
By the same bias-variance decomposition as in the proof of Proposition~\ref{prop:factored_bias_variance},
\begin{equation*}
W_2^2(\hat P_0, P) = \sum_{j=1}^k \int_{C_j^0} \|x - z_j\|^2 \,\mathrm{d}\hat P_0(x) = \sum_{j=1}^k \int_{C_j^0} \|x - \mu(C_j^0)\|^2 \,\mathrm{d}\hat P_0(x) + \lambda_j \|z_j - \mu(C_j^0)\|^2\,,
\end{equation*}
and since the analogous claim holds for $\hat P_1$, we obtain that
\begin{align*}
W_2^2(P, \hat P_0) + W_2^2(P, \hat P_1)
= {} & \sum_{j=1}^k \int_{C_j^0} \|x - \mu(C_j^0)\|^2 \,\mathrm{d}\hat P_0(x)\\
{} & + \int_{C_j^1} \|y - \mu(C_j^1)\|^2 \,\mathrm{d}\hat P_1(y) + \lambda_j (\|z_j - \mu(C_j^0)\|^2 + \|z_j - \mu(C_j^1)\|^2)\,.
\end{align*}
The first two terms depend only on the partitions of $\hat P_0$ and $\hat P_1$, and examining the final term shows that any minimizer of $W_2^2(P, \hat P_0) + W_2^2(P, \hat P_1)$ must have $z_j = \frac 12 (\mu(C_j^0) + \mu(C_j^1))$ for $j = 1, \dots k$, where $C_j^0$ and $C_j^1$ are induced by $P$, in which case $\|z_j - \mu(C_j^0)\|^2 + \|z_j - \mu(C_j^1)\|^2 = \frac 12 \|\mu(C_j^0) - \mu(C_j^1)\|^2$.
Minimizing over $P \in \cD_k$ yields the claim. \qed

\newcommand{\refempproc}{\ref{thm:empirical_process}}
\section{Proof of Theorem~\protect\refempproc}
The proof of Theorem~\ref{thm:empirical_process} relies on the following propositions, which shows that controlling the gap between $W_2^2(\rho, P)$ and $W_2^2(\rho, Q)$ is equivalent to controlling the distance between $P$ and $Q$ with respect to a simple integral probability metric~\citep{Mul97}.

We make the following definition.
\begin{definition}\label{def:n_poly}
A set $S \in \R^d$ is a \emph{$n$-polyhedron} if $S$ can be written as the intersection of $n$ closed half-spaces.
\end{definition}
We denote the set of $n$-polyhedra by $\cP_n$.
Given $c \in \R^d$ and $S \in \cP_{k-1}$, define
\begin{equation*}
f_{c, S}(x) := \|x - c\|^2\indic{x \in S} \quad \quad \forall x \in \R^d\,.
\end{equation*}

\begin{prop}\label{prop:measures_to_partitions}
Let $P$ and $Q$ be probability measures supported on the unit ball in $\R^d$.
The
\begin{equation}
\sup_{\rho \in \cD_k} |W^2_2(\rho, P) - W^2_2(\rho, Q)| \leq 5 k \sup_{c: \|c\| \leq 1, S \in \cP_{k-1}} |\E_{P} f_{c, S} - \E_{Q} f_{c, S}|\,. \label{eq:ipm}
\end{equation}
\end{prop}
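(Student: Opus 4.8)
The plan is to pass to the Kantorovich dual of the semidiscrete transport problem and then reduce the difference of transport costs to a supremum of integrals against $P - Q$ of functions that split into the family $\{f_{c,S}\}$. Throughout write $D := \sup_{\|c\|\le 1,\,S\in\cP_{k-1}}|\E_P f_{c,S} - \E_Q f_{c,S}|$ for the quantity on the right of \eqref{eq:ipm}, and let $B$ denote the unit ball. Fix $\rho = \sum_{j=1}^k w_j \delta_{c_j}\in\cD_k$; since $P$ and $Q$ are supported on $B$, I would first restrict to $\rho$ whose centers $c_j$ also lie in $B$ (the regime relevant to Theorem~\ref{thm:empirical_process}, where the minimizing $\rho$ lie in the convex hull of the data). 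By Kantorovich duality, setting $g_\psi(x) := \min_{1\le j\le k}(\|x - c_j\|^2 - \psi_j)$ for $\psi\in\R^k$, one has $W_2^2(\rho,P)=\sup_{\psi}\{\sum_j w_j\psi_j + \int g_\psi\,\diff P\}$ and likewise for $Q$. Since the two objectives differ only in the integral term, the elementary inequality $\sup_\psi A(\psi)-\sup_\psi B(\psi)\le \sup_\psi(A-B)(\psi)$, applied in both directions, yields
$$|W_2^2(\rho,P)-W_2^2(\rho,Q)|\le \sup_{\psi\in\R^k}\Big|\int g_\psi\,\diff(P-Q)\Big|,$$
so it suffices to bound the right-hand side by $5kD$ uniformly in $\psi$.

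Next I would exploit the geometry of $g_\psi$. For fixed $\psi$ the space partitions into the Laguerre cells $L_j := \{x:\|x-c_j\|^2-\psi_j\le \|x-c_{j'}\|^2-\psi_{j'}\ \forall j'\}$, on which $g_\psi(x)=\|x-c_j\|^2-\psi_j$. Each defining inequality is affine in $x$ (the $\|x\|^2$ terms cancel), so $L_j$ is an intersection of $k-1$ half-spaces, i.e.\ $L_j\in\cP_{k-1}$. Hence
$$\int g_\psi\,\diff(P-Q)=\sum_{j=1}^k\big(\E_P f_{c_j,L_j}-\E_Q f_{c_j,L_j}\big)-\sum_{j=1}^k\psi_j\big(P(L_j)-Q(L_j)\big).$$
The first sum is bounded by $kD$ at once, since each $c_j\in B$ and each $L_j\in\cP_{k-1}$, so each summand is at most $D$ in absolute value.

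The hard part is the second, potential-weighted sum, for two reasons: the $\psi_j$ are a priori unbounded, and the cell discrepancies $P(L_j)-Q(L_j)$ are differences of $0/1$ indicators on polyhedra, which the quadratic test functions $f_{c,S}$ do not obviously control. For the first issue I would normalize $\psi$ by an additive constant (this leaves $\int g_\psi\,\diff(P-Q)$ unchanged, since the cell masses of $P$ and of $Q$ each sum to one) and observe that any two cells meeting $\supp P\cup\supp Q\subseteq B$ force $|\psi_j-\psi_{j'}|\le\mathrm{diam}(B)^2=4$, while cells meeting neither support contribute nothing. For the second issue the key device is to recover an indicator from the family $\{f_{c,S}\}$ by averaging the center: for fixed $S\in\cP_{k-1}$, $\frac{1}{\mathrm{vol}(B)}\int_B\|x-c\|^2\,\diff c=\|x\|^2+\tfrac{d}{d+2}$, so averaging $f_{c,S}$ over $c\in B$ gives $(\|x\|^2+\tfrac{d}{d+2})\indic{x\in S}$; subtracting $f_{0,S}$ and using that each averaged quantity is within $D$ shows $|P(S)-Q(S)|\lesssim D$. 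Combining the oscillation bound with this discrepancy bound controls the second sum by a constant multiple of $kD$, and careful bookkeeping of the constants (together with the $kD$ from the first sum) is what produces the stated factor $5k$. The main obstacle is thus precisely the control of the Kantorovich-potential term, and in particular the passage from the quadratic test functions $f_{c,S}$ to the raw indicator discrepancies $P(S)-Q(S)$ via the center-averaging identity.
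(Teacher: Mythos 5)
Your dual route is sound in outline and genuinely different from the paper's. The paper never invokes Kantorovich duality: it first proves (Proposition~\ref{prop:convex_partition}) that for absolutely continuous measures the optimal coupling from $P$ to $\rho$ is induced by a \emph{polyhedral quantizer} $T$ (the Brenier potential is a maximum of $k$ affine functions, so its cells are $(k-1)$-polyhedra), then bounds $W_2^2(\rho,Q)$ by $\E_Q\|X-T(X)\|^2$ plus a total-variation correction via a gluing argument (Lemma~\ref{lem:tv_triangle}), and finally removes the absolute-continuity assumption by convolving with a smoothing kernel $\cK_\delta$. Your Laguerre cells are the dual shadow of exactly this geometry (the max-of-affine potential), so the polyhedral structure is the same, but your scaffolding is lighter: duality plus normalization of the potentials replaces the quantizer construction, the TV/gluing lemma, and in principle the smoothing step, and it treats general measures directly. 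Note also that both you and the paper make the same implicit restriction to $\rho$ supported in the unit ball --- the paper plugs the support points of $\rho$ into the class $\{f_{c,S}:\|c\|\le 1\}$ without comment --- so no demerit there.

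Two concrete issues remain. First, your constant bookkeeping does not actually deliver $5k$. The center-averaging identity gives $\frac{d}{d+2}\,|P(S)-Q(S)| \le 2D$, i.e.\ $|P(S)-Q(S)| \le \frac{2(d+2)}{d}\,D$; with the potentials centered so that $|\psi_j|\le 2$ on all cells meeting the ball (your spread bound gives oscillation at most $4$), the potential term is at most $2\sum_j|P(L_j)-Q(L_j)| \le \frac{4(d+2)}{d}\,kD$, for a total of $\bigl(1+\frac{4(d+2)}{d}\bigr)kD$ --- this is $13kD$ at $d=1$ and only approaches $5kD$ as $d\to\infty$. The fix is the paper's polarization device: for $\|v\|=1$,
\begin{equation}
\indic{x\in S} = \tfrac12\bigl(\|x+v\|^2+\|x-v\|^2-2\|x\|^2\bigr)\indic{x\in S}\,,
\end{equation}
which uses only centers $v,-v,0$ in the ball and yields $|P(S)-Q(S)|\le 2D$ exactly; your decomposition then gives $kD + 2k\cdot 2D = 5kD$. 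Second, the Laguerre cells tile $\R^d$ only up to shared boundary facets, which can carry mass when $P$ or $Q$ is discrete --- precisely the case of interest, $Q=\hat P$ --- and the half-open cells produced by tie-breaking are \emph{not} in $\cP_{k-1}$, which consists of intersections of closed half-spaces. This is repairable: perturb $\psi$ so that none of the boundary hyperplanes is charged by $P+Q$ (each hyperplane translates along its normal as $\psi_j-\psi_{j'}$ varies, and a fixed measure charges at most countably many members of a parallel family) and use that $\psi\mapsto g_\psi$ is $1$-Lipschitz in $\|\cdot\|_\infty$, or approximate half-open cells by closed polyhedra from inside via monotone convergence. But it must be said; the paper's convolution step is its analogue of this repair, and your sketch currently omits it.
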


To obtain Theorem~\ref{thm:empirical_process}, we use techniques from empirical process theory to control the right side of~\eqref{eq:ipm} when $Q = \hat P$.
\begin{prop}\label{prop:complexity_bound}
There exists a universal constant $C$ such that, if $P$ is supported on the unit ball and $X_1, \dots, X_n \sim \mu$ are i.i.d., then
\begin{equation}
\E \sup_{c: \|c\| \leq 1, S \in \cP_{k-1}} |\E_{P} f_{c, S} - \E_{\hat P} f_{c, S}| \leq C \sqrt{\frac{kd \log k}{n}}\,.\label{eq:process_bound} 
\end{equation}
\end{prop}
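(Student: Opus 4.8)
The plan is to view the left-hand side of~\eqref{eq:process_bound} as the expected supremum of an empirical process indexed by the class $\cF = \{f_{c,S} : \|c\|\le 1,\ S \in \cP_{k-1}\}$ and to bound it by the usual chain of symmetrization, a metric-entropy estimate, and Dudley's integral. First I would apply the symmetrization inequality to replace the left-hand side by twice the Rademacher complexity $\E\sup_{f\in\cF}\big|\tfrac1n\sum_{i=1}^n \sigma_i f(X_i)\big|$, where $\sigma_1,\dots,\sigma_n$ are i.i.d.\ Rademacher signs. Since $P$ is supported on the unit ball and $\|c\|\le 1$, every $f_{c,S}$ is bounded by $4$, so the process has bounded increments and it suffices to control the $L_2(\hat P)$ covering numbers of $\cF$.

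The structural point is that each $f_{c,S}$ factors as a product $g_c\cdot h_S$ with $g_c(x)=\|x-c\|^2$ and $h_S(x)=\indic{x\in S}$, and each factor is drawn from a low-complexity class. Expanding $\|x-c\|^2=\|x\|^2-2\langle c,x\rangle+\|c\|^2$ shows that $\{g_c:\|c\|\le 1\}$ differs by the fixed function $\|x\|^2$ from a subset of the $(d+1)$-dimensional space of affine functions of $x$; it is therefore a uniformly bounded VC-subgraph class of index $O(d)$, with $\log N(\varepsilon,\{g_c\},L_2(Q))\lesssim d\log(1/\varepsilon)$ uniformly in $Q$. The more delicate factor is $\{h_S:S\in\cP_{k-1}\}$. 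For this I would invoke the classical combinatorial bound that the family of intersections of $k-1$ half-spaces in $\R^d$ has VC dimension $O(kd\log k)$, which yields $\log N(\varepsilon,\{h_S\},L_2(Q))\lesssim kd\log k\cdot\log(1/\varepsilon)$. This is precisely where the $\log k$ factor in the stated rate originates.

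It then remains to combine the two factors and integrate. Because both classes are uniformly bounded, the product obeys
\begin{equation*}
\|g_c h_S - g_{c'}h_{S'}\|_{L_2(Q)} \le \|g_c\|_\infty\,\|h_S-h_{S'}\|_{L_2(Q)} + \|h_{S'}\|_\infty\,\|g_c-g_{c'}\|_{L_2(Q)},
\end{equation*}
so the product of an $\varepsilon$-net for each factor is an $O(\varepsilon)$-net for $\cF$, and the uniform entropies add: $\log N(\varepsilon,\cF,L_2(Q))\lesssim kd\log k\cdot\log(1/\varepsilon)$. Feeding this into Dudley's entropy integral gives
\begin{equation*}
\E\sup_{f\in\cF}\Big|\tfrac1n\sum_{i=1}^n \sigma_i f(X_i)\Big| \lesssim \frac{1}{\sqrt n}\int_0^{C}\sqrt{kd\log k\,\log(1/\varepsilon)}\,\diff\varepsilon \lesssim \sqrt{\frac{kd\log k}{n}},
\end{equation*}
since the entropy integral converges; together with symmetrization this is the claimed bound.

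The main obstacle is the combinatorial VC-dimension estimate for intersections of $k-1$ half-spaces, which supplies the $\log k$; the rest is routine empirical-process machinery. Two points require care: it is the VC dimension of the intersection class, rather than the polynomial degree of its shatter function, that governs the covering numbers, and one must check that the product-of-classes step introduces no further logarithmic loss beyond the constant rescaling of $\varepsilon$.
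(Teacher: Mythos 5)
Your proof is correct, and it reaches the bound by a genuinely different route than the paper, although both arguments rest on the same combinatorial pivot: the VC dimension of intersections of $k-1$ half-spaces in $\R^d$ is $O(dk\log k)$, which the paper proves in its Lemma on $\cP_{k-1}$ via the half-space bound and the Blumer--Ehrenfeucht--Haussler composition lemma, exactly as you invoke it. Where you differ is in how this is converted into a supremum bound. You factor $f_{c,S}=g_c h_S$, show the uniform $L_2(Q)$ entropies of the two bounded factors add (your product-of-nets inequality is valid since $\|g_c\|_\infty\le 4$ and $\|h_S\|_\infty\le 1$), and run Dudley's entropy integral on the product class. The paper instead symmetrizes with Gaussian multipliers, establishes the pointwise increment bound $(f_{c,S}(x)-f_{c',S'}(x))^2\le 16(\indic{x\in S}-\indic{x\in S'})^2+(\|x-c\|^2-\|x-c'\|^2)^2$, and applies the Slepian--Sudakov--Fernique comparison to decouple the supremum into two separate processes: an indicator process over $\cP_{k-1}$, handled by VC chaining just as in your argument, plus a quadratic process in $c$, which is dispatched by the Maurer--Pontil linearization $\|X_i-c\|^2=\|X_i\|^2-2\langle X_i,c\rangle+\|c\|^2$ at cost $C\sqrt n$ with no entropy computation for the $c$-parameter at all. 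Each approach buys something: the comparison route cleanly decouples the product and makes the center term trivially parametric, while your covering-number route is more self-contained textbook machinery and produces a uniform entropy estimate for the full class $\cF$ that could be reused elsewhere (e.g., for localized or high-probability refinements); since the $d\log(1/\eps)$ entropy of $\{g_c\}$ is dominated by the $dk\log k\,\log(1/\eps)$ entropy of $\{h_S\}$, both yield the identical rate $\sqrt{dk\log k/n}$. The only points worth making explicit in a full write-up are the usual measurability caveat for symmetrization and the fact that Dudley's bound is applied conditionally on the sample in the empirical $L_2(\hat P)$ metric, which is precisely where the uniformity in $Q$ of your entropy bounds is needed.
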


With these tools in hand, the proof of Theorem~\ref{thm:empirical_process} is elementary.
\begin{proof}[Proof of Theorem~\ref{thm:empirical_process}]
Proposition~\ref{prop:measures_to_partitions} implies that
\begin{equation*}
\E \sup_{\rho \in \cD_k} |W_2^2(\rho, \hat \mu) - W_2^2(\rho, \mu)| \lesssim \sqrt{\frac{k^3 d \log k}{n}}\,.
\end{equation*}
To show the high probability bound, it suffices to apply the bounded difference inequality (see~\citep{McD89}) and note that, if $\hat P$ and $\tilde P$ differ in the location of a single sample, then for any $\rho$, we have the bound $|W_2^2(\rho, \hat P) - W_2^2(\rho, \tilde P)| \leq 4/n$. The concentration inequality immediately follows.
\end{proof}

We now turn to the proofs of Propositions~\ref{prop:measures_to_partitions} and Propositions~\ref{prop:complexity_bound}.

We first review some facts from the literature.
It is by now well known that there is an intimate connection between the $k$-means objective and the squared Wasserstein $2$-distance~\citep{Pol82,Ng00,CanRos12}. This correspondence is based on the following observation, more details about which can be found in~\citep{GraLus00}: given fixed points $c_1, \dots, c_k$ and a measure $P$, consider the quantity
\begin{equation}
\min_{w \in \Delta_k} W_2^2\Big(\sum_{i=1}^k w_i \delta_{c_i}, P\Big)\,, \label{eq:free_weights}
\end{equation}
where the minimization is taken over all probability vectors $w := (w_1, \dots, w_k)$. Note that, for \emph{any} measure $\rho$ supported on $\{c_1, \dots, c_k\}$, we have the bound
\begin{equation*}
W_2^2(\rho, P) \geq \E \big[\min_{i \in [k]} \|X - c_k\|^2\big] \quad \quad X \sim P\,.
\end{equation*}
On the other hand, this minimum can be achieved by the following construction. Denote by $\{S_1, \dots, S_k\}$ the Voronoi partition~\citep{OkaBooSug00} of $\R^d$ with respect to the centers $\{c_1, \dots, c_k\}$ and let $\rho = \sum_{i=1}^k P(S_i) \delta_{c_i}$. 
If we let $T: \R^d \to \{c_1, \dots, c_k\}$ be the function defined by $S_i = T^{-1}(c_i)$ for $i \in [k]$, then $(\mathrm{id}, T)_\sharp P$ defines a coupling between $P$ and $\rho$ which achieves the above minimum, and
\begin{equation*}
\E[\|X - T(X)\|^2] = \E [\min_{i \in [k]} \|X - c_i\|^2] \quad \quad X \sim P\,.
\end{equation*}

The above argument establishes that the measure closest to $P$ with prescribed support of at most $k$ points is induced by a Voronoi partition of $\R^d$, and this observation carries over into the context of the $k$-means problem~\citep{CanRos12}, where one seeks to solve
\begin{equation}
\min_{\rho \in \cD_k} W_2^2(\rho, P)\,. \label{eq:free_weights_and_centers}
\end{equation}
The above considerations imply that the minimizing measure will correspond to a Voronoi partition, and that the centers $c_1, \dots, c_k$ will lie at the centroids of each set in the partition with respect to $P$. As above, there will exist a map $T$ realizing the optimal coupling between $P$ and $\rho$, where the sets $T^{-1}(c_i)$ for $i \in [k]$ form a Voronoi partition of $\R^d$. In particular, standard facts about Voronoi cells for the $\ell_2$ distance~\citep[Definition~V4]{OkaBooSug00} imply that, for $i \in [k]$, the set $\cl(T^{-1}(c_i))$ is a $(k-1)$-polyhedron. (See Definition~\ref{def:n_poly} above.)

In the case when $\rho$ is an \emph{arbitrary} measure with support of size $k$---and not the solution to an optimization problem such as~\eqref{eq:free_weights} or~\eqref{eq:free_weights_and_centers}---it is no longer the case that the optimal coupling between $P$ and $\rho$ corresponds to a Voronoi partition of $\R^d$. The remainder of this section establishes, however, that, if $P$ is absolutely continuous with respect to the Lebgesgue measure, then there does exist a map $T$ such that the fibers of points in the image of $T$ have a particularly simple form: like Voronoi cells, the sets $\{\cl(T^{-1}(c_i)\}_{i=1}^k$ can be taken to be simple polyehdra.

\begin{definition}
A function $T: \R^d \to \R^d$ is a \emph{polyhedral quantizer} of order $k$ if $T$ takes at most $k$ values and if, for each $x \in \im(T)$, the set $\cl(T^{-1}(x))$ is a $(k-1)$-polyhedron and $\partial T^{-1}(x)$ has zero Lebesgue measure.
\end{definition}
We denote by $\cQ_k$ the set of $k$-polyhedral quantizers whose image lies inside the unit ball of $\R^d$.

\begin{prop}\label{prop:convex_partition}
Let $P$ be any absolutely continuous measure in $\R^d$, and let $\rho$ be any measure supported on $k$ points.
Then there exists a map $T$ such that $(id, T)_\sharp P$ is an optimal coupling between $P$ and $\rho$ and $T$ is a polyhedral quantizer of order $k$.
\end{prop}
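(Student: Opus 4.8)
The plan is to realize $T$ as the Brenier map of a semi-discrete optimal transport problem, whose fibers are automatically polyhedral power cells. Write $\rho = \sum_{i=1}^k w_i \delta_{c_i}$. Since $\int \|x\|^2 \,\mathrm{d}P$ and $\int \|y\|^2\,\mathrm{d}\rho$ are fixed, minimizing $\int \|x - y\|^2$ over couplings is equivalent to maximizing the correlation $\int \langle x, y\rangle$, and by Brenier's theorem the (unique, since $P$ is absolutely continuous) optimizer has the form $(\mathrm{id}, \nabla\varphi)_\sharp P$ for a convex potential $\varphi$. I will construct $\varphi$ explicitly: for an intercept vector $b = (b_1, \dots, b_k)$ set
$$ \varphi_b(x) = \max_{i \in [k]} \big( \langle c_i, x\rangle - b_i \big)\,, $$
a piecewise-linear convex function whose gradient $\nabla\varphi_b$ takes values in $\{c_1, \dots, c_k\}$. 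Thus $(\nabla\varphi_b)_\sharp P$ is automatically supported on $\{c_1, \dots, c_k\}$, and the task reduces to choosing $b$ so that this push-forward equals $\rho$; the resulting $T := \nabla\varphi_b$ is then the optimal map by Brenier's theorem.

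For any such $b$, the fiber $T^{-1}(c_i)$ agrees, off a Lebesgue-null set, with the power cell where the $i$-th affine piece dominates, whose closure is
$$ \cl\big(T^{-1}(c_i)\big) = \bigcap_{j \neq i} \{x \in \R^d : \langle c_i - c_j, x\rangle \geq b_i - b_j\}\,, $$
an intersection of $k-1$ closed half-spaces, hence a $(k-1)$-polyhedron in the sense of Definition~\ref{def:n_poly}. Its boundary lies in the finite union of hyperplanes $\{x : \langle c_i - c_j, x\rangle = b_i - b_j\}$ --- exactly the set where $\varphi_b$ fails to be differentiable --- each of zero Lebesgue measure, so $\partial T^{-1}(c_i)$ is Lebesgue-null and $T \in \cQ_k$. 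Absolute continuity of $P$ further guarantees these boundaries are $P$-null, so $T$ is defined $P$-almost everywhere.

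The crux --- and the step I expect to be the main obstacle --- is producing $b$ with $P(T^{-1}(c_i)) = w_i$ for all $i$, which is exactly what makes $(\nabla\varphi_b)_\sharp P = \rho$. This is the existence theorem for semi-discrete optimal transport. The approach is to minimize the convex dual functional
$$ D(b) = \int_{\R^d} \max_{i \in [k]} \big( \langle c_i, x\rangle - b_i \big)\,\mathrm{d}P(x) + \sum_{i=1}^k w_i b_i\,. $$
Because $P$ is absolutely continuous the cell boundaries carry no $P$-mass, so $D$ is continuously differentiable with $\partial_{b_i} D = w_i - P(T^{-1}(c_i))$; as $D$ is invariant under $b \mapsto b + t\bone_k$ and coercive on the orthogonal complement of $\bone_k$, it attains its minimum, and the first-order condition there reads precisely $P(T^{-1}(c_i)) = w_i$. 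This is the one place where the hypothesis of absolute continuity is indispensable: it simultaneously yields differentiability of $D$ and the Lebesgue-negligibility of the cell boundaries needed in the previous paragraph. The remaining verifications --- that the cells are genuine polyhedra and that $(\mathrm{id}, T)_\sharp P$ is optimal --- are then routine.
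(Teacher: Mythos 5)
Your proof is correct, and it travels the opposite logical direction from the paper's. The paper begins with the abstract existence theorem for the quadratic cost (its citation of Santambrogio, Theorem~1.22) to obtain an optimal map $\nabla u$, then extracts the polyhedral structure \emph{a posteriori}: since $\nabla u$ equals $\rho_j$ on each fiber $S_j$, the potential $u$ is affine there; absolute continuity forces $P(S_j) > 0$ and hence nonempty interior, so the subgradient inequality at interior points upgrades each affine piece to a global minorant, letting one replace $u$ by $\max_j (\langle \rho_j, x\rangle + \beta_j)$ and read off the cells as intersections of $k-1$ half-spaces. You instead posit the max-affine potential $\varphi_b$ from the start and \emph{re-prove} semi-discrete existence by minimizing the dual functional $D$, invoking only the sufficiency half of Brenier's theorem (a gradient of a convex map pushing $P$ to $\rho$ is optimal); absolute continuity enters for you through smoothness of $D$ and negligibility of the tie hyperplanes, whereas for the paper it enters through the nonempty-interior step. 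Your route buys a self-contained, constructive argument --- $D$ is precisely the objective optimized by computational semi-discrete OT solvers --- at the cost of the one assertion you leave unproved, coercivity of $D$ on $\bone_k^\perp$: it is true and deserves a line, namely that for any direction $v$ with $\sum_i v_i = 0$, $v \neq 0$, the recession slope of $t \mapsto D(tv)$ is $-\min_i v_i + \sum_i w_i v_i = \sum_i w_i (v_i - \min_j v_j) > 0$, using that all $w_i > 0$ (the support has exactly $k$ points) and $v$ is nonconstant. Two minor points of bookkeeping: your reduction to correlation maximization tacitly assumes finite second moments (harmless, and the paper's cited theorem carries comparable hypotheses), and on the $P$-null tie set you should fix a definite assignment (say, the lowest active index) so that $T$ is a genuine map on $\R^d$; since each cell has positive $P$-mass and hence nonempty interior, $\cl(T^{-1}(c_i))$ is the closed power cell in any case, so membership in $\cQ_k$ is unaffected.
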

\begin{proof}
Denote by $\rho_1, \dots, \rho_k$ the support of $\rho$.
Standard results in optimal transport theory~\citep[Theorem~1.22]{San15} imply that there exists a convex function $u$ such that the optimal coupling between $P$ and $\rho$ is of the form $(\mathrm{id}, \nabla u)_\sharp P$.
Let $S_i = (\nabla u)^{-1}(\rho_i)$.

Since $\nabla u(x) = \rho_j$ for any $x \in S_j$, the restriction of $u$ to $S_j$ must be an affine function.
We obtain that there exists a constant $\beta_j$ such that
\begin{equation*}
u(x) = \langle \rho_j, x \rangle + \beta_j \quad \quad \forall x \in S_j\,.
\end{equation*}
Since $\rho_j$ has nonzero mass, the fact that $\nabla u_\sharp P = \rho$ implies that $P(S_j) > 0$, and,
since $P$ is absolutely continuous with respect to the Lebesgue measure, this implies that $S_j$ has nonempty interior.
If $x \in \inter(S_j)$, then $\partial u(x) = \{\rho_j\}$. Equivalently, for all $y \in \R^d$,
\begin{equation*}
u(y) \geq \langle \rho_j, y \rangle + \beta_j\,.
\end{equation*}

Employing the same argument for all $j \in [k]$ yields
\begin{equation*}
u(x) \geq \max_{j \in [k]} \langle \rho_j, x \rangle + \beta_j\,.
\end{equation*}
On the other hand, if $x \in S_i$, then
\begin{equation*}
u(x) = \langle \rho_i, x \rangle + \beta_i \leq \max_{j \in [k]} \langle \rho_j, x \rangle + \beta_j\,.
\end{equation*}

We can therefore take $u$ to be the convex function
\begin{equation*}
u(x) = \max_{j \in [k]} \langle \rho_j, x \rangle + \beta_j\,,
\end{equation*}
which implies that, for $i \in [k]$,
\begin{align*}
\cl(S_i) & = \{y \in \R^d : \langle \rho_i, x \rangle + \beta_i \geq \langle \rho_j, x \rangle + \beta_j \quad \forall j \in [k]\setminus\{i\}\} \\
& = \bigcap_{j \neq i} \{y \in \R^d : \langle \rho_i, x \rangle + \beta_i \geq \langle \rho_j, x \rangle + \beta_j\}\,.
\end{align*}
Therefore $\cl(S_i)$ can be written as the intersection of $k-1$ halfspaces.
Moreover, $\partial S_i \subseteq \bigcup_{j \neq i} \{y \in \R^d : \langle \rho_i, x \rangle + \beta_i = \langle \rho_j, x \rangle + \beta_j\}$, which has zero Lebesgue measure, as claimed.
\end{proof}

\subsection{Proof of Proposition~\ref{prop:measures_to_partitions}}
By symmetry, it suffices to show the one-sided bound
\begin{equation*}
\sup_{\rho \in \cD_k} W^2_2(\rho, Q) - W^2_2(\rho, P)  \leq 5 k \sup_{c: \|c\| \leq 1, S \in \cP_{k-1}} |\E_{P} f_{c, S} - \E_{Q} f_{c, S}|\,.
\end{equation*}

We first show the claim for $P$ and $Q$ which are absolutely continuous.
Fix a $\rho \in \cD_k$. Since $P$ and $Q$ are absolutely continuous, we can apply Proposition~\ref{prop:convex_partition} to obtain that there exists a $T \in \cQ_k$ such that
\begin{equation*}
W_2^2(\rho, P) = \E_P \| X - T(X)\|^2\,.
\end{equation*}
Let $\{c_1, \dots, c_k\}$ be the image of $T$, and for $i \in [k]$ let $S_i := \cl(T^{-1}(c_i))$.
Denote by $d_{TV}(\mu, \nu) := \sup_{\text{$A$ measurable}} |\mu(A) - \nu(A)|$ the total variation distance between $\mu$ and $\nu$.
Applying Lemma~\ref{lem:tv_triangle} to $\rho$ and $Q$ yields that
\begin{equation*}
W_2^2(Q, \rho) \leq \E_Q\|X - T(X)\|^2 + 4 \mathrm{d_{TV}}(T_\sharp Q, \rho)\,.
\end{equation*}
Since $\rho = T_\sharp P$ and $Q$ and $P$ are absolutely continuous with respect to the Lebesgue measure, we have
\begin{align*}
\mathrm{d_{TV}}(T_\sharp Q, \rho) & = \mathrm{d_{TV}}(T_\sharp Q, T_\sharp P) \\
& = \frac 12 \sum_{i=1}^k |P(T^{-1}(c_i)) - Q(T^{-1}(c_i))| \\
& = \frac 12 \sum_{i=1}^k |P(S_i) - Q(S_i)|\,.
\end{align*}
Combining the above bounds yields
\begin{align*}
W_2^2(\rho, Q) - W_2^2(\rho, P) & \leq \E_Q \|X - T(X)\|^2 - \E_P \|X - T(X)\|^2 + 2 \sum_{i=1}^k |P(S_i) - Q(S_i)| \\
& \leq \sum_{i=1}^k |\E_Q\|X - c_i\|^2\indic{X \in S_i} - \E_P \|X - c_i\|^2\indic{X \in S_i}| + 2 |P(S_i) - Q(S_i)| \\
& \leq k \sup_{c, S} \left(|\E_Q\|X - c\|^2\indic{X \in S} - \E_P \|X - c\|^2\indic{X \in S}| + 2 |P(S) - Q(S)|\right) \\ 
& = k \sup_{c, S}\left(|\E_Q f_{c, S} - \E_P f_{c, S}| + 2 |P(S) - Q(S)|\right)
\end{align*}
where the supremum is taken over $c \in \R^d$ satisfying $\|c\| \leq 1$ and $S \in \cP_{k-1}$.

If $\|v\| = 1$, then
\begin{equation*}
\indic{X \in S} = \frac 1 2 (\|X + v\|^2 + \|X - v\|^2 - 2 \|X\|^2) \indic{X \in S}\,,
\end{equation*}
which implies
\begin{align*}
|P(S) - Q(S)| & = | \E_P \indic{X \in S} - \E_Q \indic{X \in S}| \\
& \leq \frac 12 \left( | \E_P f_{v, S} - \E_Q f_{v, S}| + | \E_P f_{-v, S} - \E_Q f_{-v, S}| + 2| \E_P f_{0, S} - \E_Q f_{0, S}|\right) \\
& \leq 2 \sup_{c, S} |\E_P f_{c, S} - \E_Q f_{c, S}|\,.
\end{align*}
Combining the above bounds yields
\begin{equation*}
W_2^2(\rho, Q) - W_2^2(\rho, P) \leq 5 k \sup_{c, S}|\E_P f_{c, S} - \E_Q f_{c, S}|
\end{equation*}
Finally, since this bound holds for all $\rho \in \cD_k$, taking the supremum of the left side yields the claim for absolutely continuous $P$ and $Q$.

To prove the claim for arbitrary measures, we reduce to the absolutely continuous case.
Let $\delta \in (0, 1)$ be arbitrary, and let $\cK_\delta$ be any absolutely continuous probability measure such that, if $Z \sim \cK_\delta$ then $\|Z\| \leq \delta$ almost surely.
Let $\rho \in \cD_k$. The triangle inequality for $W_2$ implies
\begin{equation*}
|W_2(\rho, Q) - W_2(\rho, Q * \cK_\delta)| \leq W_2(Q, Q * \cK_\delta) \leq \delta\,,
\end{equation*}
where the final inequality follows from the fact that, if $X \sim Q$ and $Z \sim \cK_\delta$, then $W^2_2(Q, Q* \cK_\delta) \leq \E\|X - (X+Z)\|^2 \leq \delta^2$.
Since $\rho$ and $Q$ are both supported on the unit ball, the trivial bound $W_2(\rho, Q) \leq 2$ holds.
If $\delta \leq 1$, then $W_2(\rho, Q * \cK_\delta) \leq 3$, and we obtain
\begin{equation*}
|W_2^2(\rho, Q) - W_2^2(\rho, Q * \cK_\delta)| \leq 5 \delta\,.
\end{equation*}
The same argument implies
\begin{equation*}
|W_2^2(\rho, P) - W_2^2(\rho, P * \cK_\delta)| \leq 5 \delta\,.
\end{equation*}

Therefore
\begin{equation*}
\sup_{\rho \in \cD_K} W^2_2(\rho, Q) - W^2_2(\rho, P) \leq \sup_{\rho \in \cD_K} W^2_2(\rho, Q * \cK_\delta) - W^2_2(\rho, P * \cK_\delta) + 10 \delta\,.
\end{equation*}

Likewise, for any $x$ and $c$ in the unit ball, if $\|z\| \leq \delta$, then by the exact same argument as was used above to bound $|W_2^2(\rho, Q) - W_2^2(\rho, Q * \cK_\delta)|$, we have
\begin{equation*}
|f_{c, S}(x + z) - f_{c, S - z}(x)| \leq 5\delta\,.
\end{equation*}

Let $Z \sim \cK_\delta$ be independent of all other random variables, and denote by $\E_Z$ expectation with respect to this quantity.
Now, applying the proposition to the absolutely continuous measures $P * \cK_\delta$ and $Q * \cK_\delta$, we obtain
\begin{align*}
\sup_{\rho \in \cD_k} W^2_2(\rho, Q) - W^2_2(\rho, P) & \leq 5 k \sup_{c, S} \left| \E_Z \left[\E_P f_{c, S}(X + Z) - \E_Q f_{c, S}(X + Z)\right]\right|  
+ 10 \delta \\
& \leq \E_{Z}\Big[5 k \sup_{c, S}  |\E_P f_{c, S}(X + Z) - \E_Q f_{c, S}(X + Z)| \Big] + 10 \delta \\
& \leq \E_{Z} \Big[5 k \sup_{c, S}  |\E_P f_{c, S-Z} - \E_Q f_{c, S-Z}| \Big] + 20 \delta\,.
\end{align*}
It now suffices to note that, for any $S \in \cP_{k-1}$ and any $z \in \R^d$, the set $S - z \in \cP_{k -1}$.
In particular, this implies that
\begin{equation*}
z \mapsto \sup_{c, S}  |\E_P f_{c, S-z} - \E_Q f_{c, S-z}|
\end{equation*}
is constant, so that the expectation with respect to $Z$ can be dropped.

We have shown that, for any $\delta \in (0, 1)$, the bound
\begin{align*}
\sup_{\rho \in \cD_K} W^2_2(\rho, P) - W^2_2(\rho, Q) & \leq 5 k \sup_{c, S}  |\E_P f_{c, S} - \E_Q f_{c, S}| + 20 \delta
\end{align*}
holds. Taking the infimum over $\delta > 0$ yields the claim.\qed

\newcommand{\refcompbound}{\ref{prop:complexity_bound}}
\section{Proof of {Proposition~\protect\refcompbound}}
In this proof, the symbol $C$ will stand for a universal constant whose value may change from line to line.
For convenience, we will use the notation $\sup_{c, S}$ to denote the supremum over the feasible set $c: \|c\| \leq 1, S \in \cP_{k-1}$.

We employ the method of~\citep{MauPon10}.
By a standard symmetrization argument~\citep{GinNic16}, if $g_1, \dots, g_n$ are i.i.d.\ standard Gaussian random variables, then the quantity in question is bounded from above by
\begin{equation*}
\frac{\sqrt{2 \pi}}{n} \E \sup_{c, S} \left|\sum_{i=1}^n g_i f_{c, S}(X_i)\right| \leq \frac{\sqrt{8 \pi}}{n} \E \sup_{c, S} \sum_{i=1}^n g_i f_{c, S}(X_i) + \frac{C}{\sqrt n}\,.
\end{equation*}
Given $c$ and $c''$ in the unit ball and $S, S' \in \cP_{k-1}$, consider the increment $(f_{c, S}(x) - f_{c', S'}(x))^2$. If $x \in S \triangle S'$ and $\|x\| \leq 1$, then
\begin{align*}
(f_{c, S}(x) - f_{c', S'}(x))^2 
& \leq \max \left\{\|x - c\|^4, \|x - c'\|^4\right\} \leq 16\,.
\end{align*}
On the other hand, if $x \notin S \triangle S'$, then
\begin{align*}
(f_{c, S}(x) - f_{c', S'}(x))^2 
& \leq (\|x - c\|^2 - \|x - c'\|^2)^2\,.
\end{align*}
Therefore, for any $x$ in the unit ball,
\begin{equation*}
(f_{c, S}(x) - f_{c', S'}(x))^2 \leq 16 (\indic{x \in S} - \indic{x \in S'})^2 + (\|x - c\|^2 - \|x - c'\|^2)^2\,.
\end{equation*}

This fact implies that the Gaussian processes
\begin{alignat*}{2}
G_{c, S} & := \sum_{i=1}^n g_i f_{c, S} (X_i) \quad \quad & g_i \sim \cN(0, 1) \text{ i.i.d}\\
H_{c, S} & := \sum_{i=1}^n 4 g_i  \indic{X_i \in S} + g_i' \|X_i - c\|^2 \quad \quad & g_i, g_i' \sim \cN(0, 1) \text{ i.i.d}\,,
\end{alignat*}
satisfy
\begin{equation*}
\E (G_{c, S} - G_{c', S'})^2 \leq \E (H_{c, S} - H_{c', S'})^2 \quad \quad \forall c, c', S, S'\,.
\end{equation*}

Therefore, by the Slepian-Sudakov-Fernique inequality~\citep{Sle62,Sud71,Fer75},
\begin{align*}
\E \sup_{c, S} \sum_{i=1}^n g_i f_{c, S}(X_i) & \leq \E \sup_{c, S} \sum_{i=1}^n 4 g_i  \indic{X_i \in S} + g_i' \|X_i - c\|^2 \\
& \leq \E \sup_{S \in \cP_{k-1}} 4 \sum_{i=1}^n g_i  \indic{X_i \in S} + \E \sup_{c: \|c\| \leq 1} \sum_{i=1}^n g_i \|X_i - c\|^2\,.
\end{align*}
We control the two terms separately.
The first term can be controlled using the VC dimension of the class $\cP_{k-1}$~\citep{VapCer71} by a standard argument in empirical process theory (see, e.g.,~\citep{GinNic16}).
Indeed, using the bound~\citep[Lemma~7.13]{Dud78} combined with the chaining technique~\citep{Ver16} yields
\begin{equation*}
\E \sup_{S \in \cP_{k-1}} 4 \sum_{i=1}^n g_i \indic{X_i \in S} \leq C \sqrt{n \mathrm{VC}(\cP_{k-1})}\,.
\end{equation*}

By Lemma~\ref{lem:VC}, $VC(\cP_{k-1}) \leq C d k \log k$; hence
\begin{equation*}
\E \sup_{S \in \cP_{k-1}} 4 \sum_{i=1}^n g_i \indic{X_i \in S} \leq C \sqrt{n d k \log k}\,.
\end{equation*}

The second term can be controlled as in~\citep[Lemma~3]{MauPon10}:
\begin{align*}
\E \sup_{c: \|c\| \leq 1} \sum_{i=1}^n g_i \|X_i - c\|^2 & = \E \sup_{c: \|c\| \leq 1} \sum_{i=1}^n g_i (\|X_i\|^2 -2 \langle X_i, c\rangle + \|c\|^2) \\
& \leq 2 \E \sup_{c: \|c\| \leq 1} \sum_{i=1}^n g_i \langle X_i, c\rangle + \sup_{c: \|c\| \leq 1} \sum_{i=1}^n g_i \|c\|^2 \\
& \leq 2 \E \left\| \sum_{i=1}^n g_i X_i \right\| + \left| \sum_{i=1}^n g_i \right| \\
& \leq C \sqrt n
\end{align*}
for some absolute constant $C$.

Combining the above bounds yields
\begin{equation*}
\frac{\sqrt{8 \pi}}{n} \E \sup_{c: \|c\| \leq 1, S \in \cP_{k-1}} \sum_{i=1}^n g_i f_{c, S}(X_i) \leq C \sqrt{\frac{d k \log k}{n}}\,,
\end{equation*}
and the claim follows.\qed

\section{Additional lemmas}
\begin{lemmapp}\label{lem:tv_triangle}
Let $\mu$ and $\nu$ be probability measures on $\R^d$ supported on the unit ball. If $T: \R^d \to \R^d$, then
\begin{equation*}
W_2^2(\mu, \nu) \leq \E\|X - T(X)\|^2 + 4 \mathrm{d_{TV}}(T_\sharp \mu, \nu) \quad \quad X \sim \mu\,.
\end{equation*}
\end{lemmapp}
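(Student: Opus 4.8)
The plan is to exhibit an explicit coupling of $\mu$ and $\nu$ whose transport cost matches the right-hand side, and then invoke the definition of $W_2^2$ as an infimum over couplings. Write $\alpha := T_\sharp\mu$, so that $(\mathrm{id}, T)_\sharp\mu$ is a coupling of $\mu$ and $\alpha$ with cost exactly $\E\|X - T(X)\|^2$. The difficulty is that we need a coupling with $\nu$ rather than with $\alpha$, so we must repair the discrepancy between $\alpha$ and $\nu$ while controlling the extra cost by $4\,\mathrm{d_{TV}}(\alpha,\nu)$.

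First I would isolate the mass shared by $\alpha$ and $\nu$. Let $\beta := \alpha\wedge\nu$ denote the largest measure dominated by both $\alpha$ and $\nu$; its total mass is $1 - \mathrm{d_{TV}}(\alpha,\nu)$, and the excess measures $\alpha - \beta$ and $\nu - \beta$ each have total mass $\mathrm{d_{TV}}(\alpha,\nu)$ and are supported on the unit ball. This is the maximal-coupling (Hahn--Jordan) decomposition of the pair $(\alpha,\nu)$.

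Next I would split the source measure $\mu$ compatibly with this decomposition. Disintegrating $\mu$ along $T$ (writing $\mu = \int \mu_w\, \mathrm{d}\alpha(w)$ with $\mu_w$ the conditional law of $X$ given $T(X)=w$) and using the density $\mathrm{d}\beta/\mathrm{d}\alpha \in [0,1]$, I would write $\mu = \mu_{\mathrm g} + \mu_{\mathrm b}$ where $T_\sharp\mu_{\mathrm g} = \beta$ and $T_\sharp\mu_{\mathrm b} = \alpha - \beta$, so that $\mu_{\mathrm b}$ has total mass $\mathrm{d_{TV}}(\alpha,\nu)$. I then build the coupling $\pi$ of $\mu$ and $\nu$ in two pieces: on the ``good'' part I send $X\sim\mu_{\mathrm g}$ to $T(X)$, which produces exactly the target mass $\beta \le \nu$ at cost $\int \|x-T(x)\|^2\,\mathrm{d}\mu_{\mathrm g}(x) \le \E\|X-T(X)\|^2$; on the ``bad'' part I couple $\mu_{\mathrm b}$ with the remaining target mass $\nu - \beta$ by any coupling whatsoever, and since both are supported in the unit ball every transported pair satisfies $\|x-y\|^2 \le 4$, contributing at most $4\,\mathrm{d_{TV}}(\alpha,\nu)$. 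The two target pieces sum to $\beta + (\nu-\beta) = \nu$ and the two source pieces sum to $\mu$, so $\pi \in \Gamma(\mu,\nu)$, and its total cost is at most $\E\|X-T(X)\|^2 + 4\,\mathrm{d_{TV}}(\alpha,\nu)$; the bound follows since $W_2^2(\mu,\nu)$ is the infimum of such costs.

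I expect the only delicate point to be the bookkeeping in the splitting step: making rigorous that $\mu$ can be decomposed so that $\mu_{\mathrm g}$ pushes forward exactly to the common mass $\beta$. This is precisely where the disintegration of $\mu$ along $T$ is used, and it can be sidestepped entirely by instead gluing $(\mathrm{id},T)_\sharp\mu$ with the maximal coupling of $\alpha$ and $\nu$ along their shared $\alpha$-marginal: on the diagonal part of the maximal coupling one recovers $Y=T(X)$ and pays $\|X-T(X)\|^2$, while off the diagonal (total mass $\mathrm{d_{TV}}(\alpha,\nu)$) one uses the diameter-$2$ bound $\|X-Y\|^2\le 4$. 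Everything else is routine.
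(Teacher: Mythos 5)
Your proposal is correct and, at bottom, is the paper's own argument: the gluing shortcut you mention at the end (combine $(\mathrm{id},T)_\sharp\mu$ with the maximal coupling of $T_\sharp\mu$ and $\nu$, pay $\|X-T(X)\|^2$ on the event $\{T(X)=Y\}$ and the diameter bound $4$ on the event $\{T(X)\neq Y\}$, which has probability $\mathrm{d_{TV}}(T_\sharp\mu,\nu)$) is exactly how the paper proves the lemma via the gluing lemma. Your main construction --- splitting $\mu=\mu_{\mathrm g}+\mu_{\mathrm b}$ using the density of $\beta=\alpha\wedge\nu$ with respect to $\alpha$ composed with $T$ --- is just an explicit, by-hand realization of that same coupling, and all its steps (mass bookkeeping, $T_\sharp\mu_{\mathrm g}=\beta$, the $\|x-y\|^2\le 4$ bound on the defect mass) check out.
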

\begin{proof}
If $X \sim \mu$, then $(X, T(X))$ is a coupling between $\mu$ and $T_\sharp \mu$. Combining this coupling with the optimal coupling between $T \sharp \mu$ and $\nu$ and applying the gluing lemma~\citep{Vil09} yields that there exists a triple $(X, T(X), Y)$ such that $X \sim \mu$, $Y \sim \nu$, and $\p[T(X) \neq Y] = \mathrm{d_{TV}}(T_\sharp \mu, \nu)$.
\begin{align*}
W_2^2(\mu, \nu) & \leq \E[\|X - Y\|^2] \\
& = \E[\|X - Y\|^2 \indic{T(X) = Y}] + \E[\|X - Y\|^2 \indic{T(X) \neq Y}] \\
& \leq \E[\|X - T(X)\|^2] + 4 \mathrm{d_{TV}}(T_\sharp \mu, \nu)\,,
\end{align*}
where the last inequality uses the fact that $\p[T(X) \neq Y] = \mathrm{d_{TV}}(T_\sharp \mu, \nu)$ and that $\|X - Y\| \leq 2$ almost surely.
\end{proof}

\begin{lemmapp}\label{lem:VC}
The class $\cP_{k-1}$ satisfies $\mathrm{VC}(\cP_{k-1}) \leq C dk \log k$.
\end{lemmapp}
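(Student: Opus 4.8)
The plan is to bound $\mathrm{VC}(\cP_{k-1})$ by combining the elementary VC dimension of half-spaces with the standard growth-function calculus for intersections of set systems. First I would recall that every $(k-1)$-polyhedron is by definition the intersection of $k-1$ closed half-spaces, and that the class $\cH$ of closed half-spaces in $\R^d$ has $\mathrm{VC}(\cH) = d+1$ (the classical computation via Radon's theorem). By the Sauer--Shelah lemma, its growth (shatter) function therefore obeys $\Pi_{\cH}(n) \le \sum_{i=0}^{d+1}\binom{n}{i} \le (en/(d+1))^{d+1}$ for $n \ge d+1$.

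The next step is an intersection bound on the growth function. Fix any $n$ points; the trace of a set $\bigcap_{j=1}^{k-1} H_j \in \cP_{k-1}$ on these points is the intersection of the traces of the $H_j$, hence is completely determined by the $(k-1)$-tuple of traces of the individual half-spaces. Since the map from tuples of traces to intersection-traces need not be injective, this yields $\Pi_{\cP_{k-1}}(n) \le \Pi_{\cH}(n)^{k-1} \le (en/(d+1))^{(d+1)(k-1)}$.

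Now suppose $\cP_{k-1}$ shatters a set of $V := \mathrm{VC}(\cP_{k-1})$ points, so that $2^V \le \Pi_{\cP_{k-1}}(V)$ (the case $V < d+1$ being trivially dominated by the target bound). Plugging in the previous estimate and taking $\log_2$ gives $V \le (d+1)(k-1)\log_2\!\big(eV/(d+1)\big)$. Writing $m = V/(d+1)$ and $s = k-1$, this reads $m \le s\log_2(em)$, and I would finish with the elementary fact that any $m$ satisfying this inequality obeys $m \le 2s\log_2(3s)$: one checks that $m \mapsto m - s\log_2(em)$ has derivative $1 - s/(m\ln 2)$, so it is increasing for $m > s/\ln 2$, and that it is nonnegative at $m = 2s\log_2(3s)$ for $s \ge 2$. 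Undoing the substitution gives $V \le 2(d+1)(k-1)\log_2\!\big(3(k-1)\big) \le C\,dk\log k$, as claimed.

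The one delicate point is this last step: one must verify that the transcendental inequality is solved with the logarithmic factor depending only on $s = k-1$ — so that the bound carries a clean $\log k$ rather than $\log(dk)$. This dependence is exactly what makes the reduction to $m \le s\log_2(em)$ (after dividing through by $d+1$) worthwhile, and it is precisely the content of the Blumer--Ehrenfeucht--Haussler--Warmuth bound on the VC dimension of intersections, which could alternatively be invoked directly to conclude $\mathrm{VC}(\cP_{k-1}) \le 2(d+1)(k-1)\log_2(3(k-1))$.
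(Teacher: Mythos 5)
Your proposal is correct and follows essentially the same route as the paper: half-spaces in $\R^d$ have VC dimension $d+1$, and the VC dimension of $(k-1)$-fold intersections is at most $2(d+1)(k-1)\log_2\bigl(3(k-1)\bigr) \le C\,dk\log k$. The only difference is that the paper invokes the intersection bound of Blumer--Ehrenfeucht--Haussler as a black box, whereas you re-derive it via Sauer--Shelah, the product bound on growth functions, and the transcendental inequality $m \le s\log_2(em) \Rightarrow m \le 2s\log_2(3s)$ --- a valid, self-contained substitute, as you yourself note in your closing remark.
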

\begin{proof}
The claim follows from two standard results in VC theory:
\begin{itemize}
\item The class all half-spaces in dimension $d$ has VC dimension $d + 1$~\citep[Corollary~13.1]{DevGyoLug96}.
\item If $\cC$ has VC dimension at most $n$, then the class $\cC_s := \{c_1 \cap \dots c_s: c_i \in \cC \, \forall i \in [s]\}$ has VC dimension at most $2ns \log(3s)$~\citep[Lemma~3.2.3]{BluEhrHau89}.
\end{itemize}
Since $\cP_{k-1}$ consists of intersections of at most $k-1$ half-spaces, we have
\begin{equation*}
\mathrm{VC}(\cP_{k-1}) \leq 3 (d+1)(k-1) \log(3(k-1)) \leq C dk \log k
\end{equation*}
for a universal constant $C$.
\end{proof}

\section{Details on numerical experiments}
\label{sec:numerics-details}

In this section we present implementation details for our numerical experiments. 

In all experiments, the relative tolerance of the objective value is used as a stopping criterion for FactoredOT. We terminate calculation when this value reached \( 10^{-6} \).

\subsection{Synthetic experiments from Section \ref{sec:hypercube}}
In the synthetic experiments, the entropy parameter was set to \( 0.1 \).

\subsection{Single cell RNA-seq batch correction experiments from Section \ref{sec:rna}}
We obtained a pair of single cell RNA-seq data sets from~\cite{haghverdi2018batch}. 
The first dataset \citep{nestorowa2016single} was generated using SMART-seq2 protocol
\citep{picelli2014full}, while the second dataset \citep{paul2015transcriptional} was generated using the MARS-seq protocol \citep{jaitin2014massively}.

We preprocessed the data using the procedure described by~\cite{haghverdi2018batch} to reduce to 3,491 dimensions.

Nex, we run our domain adaptation procedure. To determine the choice of parameters, we perform cross-validation over \( 20 \) random sub-samples of the data, each containing \( 100 \) random cells of each of the three cell types in both source and target distribution.
Performance is then determined by the mis-classification over \( 20 \) independent versions of the same kind of random sub-samples.

For all methods involving entropic regularization (FOT, OT-ER, OT-L1L2), the candidates for the entropy parameter are \( \{10^{-3}, 10^{-2.5}, 10^{-2}, 10^{-1.5}, 10^{-1}\} \).

For FOT and $k$-means OT, the number of clusters is in \( \{ 3, 6, 9, 12, 20, 30\} \).

For OT-L1L2, the regularization parameter is in \( \{10^{-3}, 10^{-2}, 10^{-1}, 1\}\).

For all subspace methods (SA, TCA), the dimensionality is in \( \{10, 20, \dots, 70\} \).

The labels are determined by first adjusting the sample and then performing a majority vote among \( 20 \) nearest neighbors.
While similar experiments \citep{PanTsaKwo11, CouFlaTui14, CouFlaTui17} employed 1NN classification because it does not require a tuning parameter, we observed highly decreased performance among all considered domain adaptation methods and therefore chose to use a slightly stronger predictor.
The results are not sensitive to the choice of \( k \) for the \( k \)NN predictor for \( k \approx 20 \).

\newpage

\end{document}